\documentclass[pdfTeX,12pt]{article}
\usepackage[top=20truemm,bottom=20truemm,left=20truemm,right=20truemm]{geometry}
\usepackage{amsmath,amsthm,amssymb}
\usepackage{algpseudocode,algorithm}
\usepackage{wrapfig}
\usepackage{color}
\usepackage{graphicx}
\usepackage{tikz}

\theoremstyle{definition}
\newtheorem{theorem}{Theorem}
\newtheorem*{theorem*}{Theorem}

\newtheorem*{definition*}{Definition}
\newtheorem{lemma}[theorem]{Lemma}
\newtheorem*{lemma*}{Lemma}
\newtheorem{assumption}[theorem]{Assumption}
\newtheorem{assumption*}{Assumption}
\newtheorem{proposition}[theorem]{Proposition}
\newtheorem*{proposition*}{Proposition}
\newtheorem{corollary}[theorem]{Corollary}
\newtheorem*{corollary*}{Corollary}
\newcommand{\tval}[1]{\theta({#1})}
\newcommand{\eval}[1]{\eta({#1})}
\newcommand{\tvali}[1]{\theta_i({#1})}
\newcommand{\evali}[1]{\eta_i({#1})}
\newcommand{\tvalix}[2]{\theta_{#1}({#2})}
\newcommand{\evalix}[2]{\eta_{#1}({#2})}
\newcommand{\qtrue}{q^*}
\newcommand{\qcur}{\hat q}
\newcommand{\wcur}[1]{\hat w_{#1}}
\newcommand{\wtrue}[1]{w^*_{#1}}
\newcommand{\wtruebf}[1]{\mathbf{w}^*_{#1}}
\newcommand{\sumid}{\sum_{i=1}^d}
\newcommand{\sumidn}{\sum_{i=1}^{d-1}}
\newcommand{\sumkK}{\sum_{k=1}^K}
\newcommand{\sumlK}{\sum_{l=1}^K}
\newcommand{\expandDe}[2]{\psi(\tval{#1}) - \psi(\tval{#2}) + \sumid
 \evali{#2}(\tvali{#2}-\tvali{#1})}
\newcommand{\expandDm}[2]{-\phi(\eval{#1}) + \phi(\eval{#2}) + \sumid
 \tvali{#1}(\evali{#1}-\evali{#2})}
\newcommand{\dgamma}[1]{\frac{df(#1)}{d\gamma}}
\newcommand{\Alg}[2]{Algorithm #1($#2$)}
\newcommand{\dual}[1]{\tilde{#1}}

\title{On a convergence property of a geometrical
algorithm for statistical manifolds\thanks{This is a full version of the paper
presented in ICONIP2019}}
\author{
Shotaro Akaho$^{1,4}$
\and
Hideitsu Hino$^{2,4}$
\and
Noboru Murata$^{3,4}$ \\
$^1$National Institute of Advanced Industrial Science and Technology, \\
Tsukuba, Ibaraki 305-8568, Japan \and
$^2$The Institute of Statistical Mathematics, Tachikawa, Tokyo, 190-8562, Japan
\and
$^3$Waseda University, Shinjuku, Tokyo 169-0072, Japan
\and
$^4$RIKEN Center for Advanced Intelligence Project, Chuo, Tokyo 103-0027, Japan
} 
\date{\today}

\begin{document}

\maketitle

\abstract{In this paper, we examine a geometrical projection algorithm for statistical inference.
The algorithm is based on Pythagorean relation and it is derivative-free as well as representation-free that is useful in nonparametric cases. 
We derive a bound of learning rate to guarantee local convergence.
In special cases of m-mixture and e-mixture estimation problems,
we calculate specific forms of the bound that can be used easily in practice.}

\section{Introduction}

Information geometry is a framework to analyze statistical inference
and machine learning\cite{amari2016}.
Geometrically, statistical inference and many machine learning
algorithms can be regarded as procedures to
find a projection to a model subspace from a given data point.
In this paper, we focus on an algorithm to find the projection.

Since the projection is given by minimizing a divergence,
a common approach to finding the projection is a gradient-based method\cite{fujiwara1995}.
However, such an approach is not applicable in some cases.
For instance, several attempts to extend the information geometrical
framework to nonparametric cases\cite{ay2017information,lebanon2015riemannian,pistone2013,takano2016},
where we need to consider a function space or each data is represented as
a point process. In such a case, it is difficult to compute
the derivative of divergence that is necessary for gradient-based methods, and
in some cases, it is difficult to deal with the coordinate explicitly.

Takano et al.\cite{takano2016} proposed a geometrical algorithm
to find the projection for nonparametric e-mixture distribution,
where the model subspace is spanned by several empirical distributions.
The algorithm that is derived based on the generalized Pythagorean theorem
only depends on the values of divergences.
It is derivative-free as well as representation-free, 
and it can be applicable to many machine learning
algorithms that can be regarded as finding a projection,
but its convergence property has not been analyzed yet.
The first contribution of this paper is to extend the algorithm
to more general cases.
The second contribution is to give a condition for the convergence
of the algorithm, which is given as a bound of learning rate. In the case of
the discrete distribution, we obtain specific forms of the bound that can be used
easily in practice.

\section{Geometrical algorithm}

\subsection{Projection in a statistical manifold}

Here we briefly review the information
geometry in order to explain the proposed
geometrical algorithm based on generalized Pythagorean theorem\cite{nagaoka1982}.

Let $(S,g,\nabla,\dual{\nabla})$ be a statistical manifold,
where $S$ is a smooth manifold with a Riemannian metric $g$,
dual affine connections $\nabla$ and $\dual{\nabla}$.
We consider the case that $S$ is (dually) flat, where there exist
a $\nabla$-affine coordinate $\theta=(\theta_1,\ldots,\theta_d)$
and a $\dual{\nabla}$-affine
coordinate $\eta=(\eta_1,\ldots,\eta_d)$.
For a flat manifold, there exist potential functions
$\psi(\theta)$ and $\phi(\eta)$, and the two coordinates $\theta$
and $\eta$ are transformed each other by
Legendre transform,
\begin{equation}
 \theta_i = \frac{\partial\phi(\eta)}{\partial\eta_i}, \quad
 \eta_i = \frac{\partial\psi(\theta)}{\partial\theta_i}, \quad
 \psi(\theta)+\phi(\eta)-\sumid \theta_i\eta_i=0.
\end{equation}
A typical example of a flat manifold is an exponential family,
where each member of the manifold is a distribution of a random variable $x$
with parameter $\xi=(\xi_1,\ldots,\xi_d)$,
\begin{equation}
\label{eq:exp}
 p(x;\xi) = \exp\left(\sumid \xi_i F_i(x) - b(\xi)\right),
\end{equation}
where $F_i(x)$ is a sufficient statistics and $\exp(-b(\xi))$ is a normalization
factor.
For the exponential family, there are two dual connections, called e-connection
and m-connection (e: exponential, m: mixture).
If we take the e-connection as the $\nabla$-connection, $\nabla$-affine
coordinate $\theta$ is equal to $\xi$ called e-coordinate,
and $\dual{\nabla}$-affine coordinate called m-coordinate is given by
\begin{equation}
 \zeta_i = \frac{\partial b(\xi)}{\partial\xi_i} =
  \mathrm{E}_\xi [F_i(x)] = \int F_i(x) p(x;\xi)dx, 
\end{equation}
where the function $b(\xi)$ becomes a potential function
$\psi(\theta)$. Note that if we take the m-connection as $\nabla$,
the relation changes in a dual way, i.e.,
$\zeta$ becomes $\theta$ and $\xi$ becomes $\eta$.

Here, for $p\in S$,
we denote the corresponding $\nabla$- and $\dual{\nabla}$-coordinate
by $\tval{p}$ and $\eval{p}$ respectively.
Let us consider a submanifold defined by linear combination of
$K$ points $p_1,\ldots, p_K\in S$,
\begin{equation}
 \label{eq:Me}
 M = \{p \mid \tval{p} =\sumkK w_k \tval{p_k}, \sumkK w_k=1\},
\end{equation}
where $\mathbf{w}=(w_1,\ldots,w_K)$ is a weight vector whose sum is 1. 
The submanifold $M$
 is an affine subspace and hence it is called an
$\nabla$-autoparallel (or $\nabla$-flat) submanifold.
In particular, if $K=2$, $M$ is a straight line of $\nabla$-coordinate
that is called $\nabla$-geodesic.

We can also consider another submanifold in the dual coordinate,
\begin{equation}
 \label{eq:Mm}
\dual{M} = \{p \mid \eval{p} = \sumkK w_k \eval{p_k}, \sumkK w_k=1\},
\end{equation}
which is called a $\dual{\nabla}$-autoparallel (or $\dual{\nabla}$-flat) submanifold.
The $\dual{\nabla}$-geodesic is defined by a straight line of $\dual{\nabla}$-coordinate.

Now let us define a $\nabla$-projection and a $\dual{\nabla}$-projection
from a point $q\in S$ onto
a submanifold $M$.
The $\nabla$-projection is a point $\qtrue\in M$ such that
$\nabla$-geodesic between $q$ and
$\qtrue$ is orthogonal to $M$ at $\qtrue$ with respect to the Riemannian metric
$g_{ij}(\tval{\qtrue})$. In the statistical manifold, $g_{ij}$ is taken as
\begin{equation}
 g_{ij}(\theta) 
  = \frac{\partial^2 \psi(\theta)}{\partial\theta_i\partial\theta_j},
\end{equation}
which is equal to Fisher information for exponential family
\begin{equation}
g_{ij}(\xi)=\mathrm{E}_\xi \left[\frac{\partial\log p(x;\xi)}{\partial\xi_i}
		    \frac{\log p(x;\xi)}{\partial\xi_j}\right].
\end{equation}
In a similar way, $\dual{\nabla}$-projection onto a submanifold $M$
is defined as a point $\qtrue$
so that the $\dual{\nabla}$-geodesic connecting $q$ and $\qtrue$ is
orthogonal to $M$.

\begin{theorem}[Generalized Pythagorean theorem\cite{nagaoka1982}]
 Let $\dual{M}$ be a $\dual{\nabla}$-autoparallel submanifold of a statistical manifold $S$,
 and the $\nabla$-projection be $\qtrue\in\dual{M}$ from a point $q\in S$, then
 for any point $p\in \dual{M}$, the following relation holds
 \begin{equation}
\label{eq:pythagorase}
  D(p,q) = D(\qtrue,q) + D(p,\qtrue),
 \end{equation}
 where $D$ is the canonical divergence defined by
 \begin{align}
  D(p,q) &= \psi(\tval{q})+\phi(\eval{p})-\sumid\evali{p}\tvali{q} \nonumber \\
  &=\expandDe{q}{p} \nonumber \\
  &=\expandDm{q}{p}.
 \end{align}
\end{theorem}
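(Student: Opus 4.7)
My approach is to reduce the identity to an inner-product orthogonality condition by direct algebraic manipulation and then invoke the defining orthogonality of the $\nabla$-projection. First I would expand each of the three divergences using the first form $D(p,q)=\psi(\tval{q})+\phi(\eval{p})-\sum_i\evali{p}\tvali{q}$ and form the combination $R := D(\qtrue,q)+D(p,\qtrue)-D(p,q)$. The terms $\psi(\tval{q})$ cancel between $D(\qtrue,q)$ and $D(p,q)$, and the terms $\phi(\eval{p})$ cancel between $D(p,\qtrue)$ and $D(p,q)$. What survives is $\psi(\tval{\qtrue})+\phi(\eval{\qtrue})$ together with three bilinear cross terms coupling the coordinates of $\qtrue$, $q$, and $p$.

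Next I would apply the Legendre identity $\psi(\tval{\qtrue})+\phi(\eval{\qtrue})=\sum_i\tvali{\qtrue}\evali{\qtrue}$ to replace the potentials by a bilinear term. Collecting the four resulting bilinear sums and factorising yields
\[
 R \;=\; \sum_{i=1}^{d} \bigl(\evali{\qtrue}-\evali{p}\bigr)\bigl(\tvali{\qtrue}-\tvali{q}\bigr) \;=\; -\sum_{i=1}^{d} \bigl(\evali{p}-\evali{\qtrue}\bigr)\bigl(\tvali{q}-\tvali{\qtrue}\bigr).
\]
So the theorem reduces to the assertion that this sum vanishes for every $p\in\dual{M}$.

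Finally I would interpret this sum geometrically. Because $\dual{M}$ is $\dual{\nabla}$-autoparallel, the $\dual{\nabla}$-geodesic from $\qtrue$ to any $p\in\dual{M}$ stays inside $\dual{M}$, so its tangent vector at $\qtrue$, whose components in the $\eta$-basis are $\evali{p}-\evali{\qtrue}$, lies in $T_{\qtrue}\dual{M}$. Likewise, the $\nabla$-geodesic from $\qtrue$ to $q$ has tangent vector with components $\tvali{q}-\tvali{\qtrue}$ in the $\theta$-basis. The key structural fact I would use is the biorthogonality $g(\partial_{\theta_i},\partial_{\eta_j})=\delta_{ij}$ of the dual coordinate bases, which follows from $\eval{}_i=\partial\psi/\partial\theta_i$ together with $g_{ij}=\partial^2\psi/\partial\theta_i\partial\theta_j$. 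Under this pairing, the algebraic sum above is exactly the Riemannian inner product of the two tangent vectors, which vanishes by the hypothesis that the $\nabla$-geodesic joining $q$ to $\qtrue$ meets $\dual{M}$ orthogonally at $\qtrue$. I expect the only delicate step to be the biorthogonality identification in this last paragraph; the remainder is bookkeeping with the potentials and the Legendre relation.
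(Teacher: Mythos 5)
Your argument is correct. Note that the paper itself offers no proof of this statement --- it is quoted as a classical result with a citation to Nagaoka--Amari --- so there is no in-paper argument to compare against; what you have written is essentially the standard textbook proof, and every step checks out. The algebra is right: expanding $R=D(\qtrue,q)+D(p,\qtrue)-D(p,q)$ with the first form of $D$, the potentials $\psi(\tval{q})$ and $\phi(\eval{p})$ cancel as you say, and after the Legendre substitution $\psi(\tval{\qtrue})+\phi(\eval{\qtrue})=\sumid\tvali{\qtrue}\evali{\qtrue}$ the four bilinear terms factor exactly into $\sumid(\evali{\qtrue}-\evali{p})(\tvali{\qtrue}-\tvali{q})$. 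The geometric step is also sound and correctly locates where each hypothesis is used: the $\dual{\nabla}$-autoparallel assumption is what makes $\dual{M}$ affine in the $\eta$-coordinates, so that $\eval{p}-\eval{\qtrue}$ represents a vector of $T_{\qtrue}\dual{M}$ for \emph{every} $p\in\dual{M}$ (not just for $p$ infinitesimally close to $\qtrue$); the biorthogonality $g(\partial_{\theta_i},\partial_{\eta_j})=\delta_{ij}$ follows from $g_{ik}=\partial\eta_i/\partial\theta_k$ and the chain rule; and the orthogonality of the $\nabla$-projection, which by definition holds against all of $T_{\qtrue}\dual{M}$, then kills the sum. One cosmetic remark: your displayed $R$ is the pairing of the two tangent vectors directly, so the final minus sign you introduce is unnecessary --- $R=g(X,Y)=0$ already --- but this does not affect correctness.
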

By exchanging $\nabla$ and $\dual{\nabla}$, we have a dual relation, i.e,
for a $\nabla$-autoparallel submanifold $M$,
 the $\dual{\nabla}$-projection $\qtrue\in M$ from a point $q\in S$ satisfies
 the relation
 \begin{equation}
 \dual{D}(p,q) =\dual{D}(\qtrue,q) +\dual{D}(p,\qtrue),
 \end{equation}
 where $p\in M$ and
 $\dual{D}$ is a dual divergence defined by $\dual{D}(p, q) = D(q, p)$.

From this theorem, we see that a $\nabla$-projection ($\dual{\nabla}$-projection)
onto a $\dual{\nabla}$-autoparallel ($\nabla$-autoparallel respectively)
submanifold is unique
and can be found by minimizing corresponding divergence, i.e.,
the $\nabla$-projection is given by
\begin{equation}
 \qtrue = \arg\min_{p\in \dual{M}} D(p, q)
\end{equation}
and the $\dual{\nabla}$-projection is given by
\begin{equation}
 \qtrue = \arg\min_{p\in M} \dual{D}(p, q).
\end{equation}

For the exponential family (\ref{eq:exp}), taking the e-connection as
$\nabla$-connection, the divergence is equal to the Kullback-Leibler divergence,
\begin{equation}
 D(p, q) = \int p(x;\xi(p)) \log\frac{p(x;\xi(p))}{p(x;\xi(q))}dx.
\end{equation}
If we take the e-connection as $\nabla$ or $\dual{\nabla}$ connection, the
corresponding projection and autoparallel submanifold is called
an e-projection and an e-autoparallel submanifold,
and similarly, an m-projection and an m-autoparallel submanifold are
defined for the m-connection.

\subsection{Geometrical algorithm for projection}
\label{sec:geo}
Now we propose a geometrical algorithm to find a $\nabla$-projection
(or $\dual{\nabla}$-projection) onto a $\dual{\nabla}$-autoparallel
(and $\nabla$-autoparallel respectively)
submanifold.
To avoid redundant description, we only formulate the $\nabla$-projection onto
a $\dual{\nabla}$-autoparallel submanifold,
since the dual case can be obtained by only exchanging $\nabla$ and
$\dual{\nabla}$.

In this paper, we impose a restriction on the projection.
\begin{assumption}
\label{assumption}
 The projection belongs to the convex hull
	    of $p_1,\ldots,p_K$ in
(\ref{eq:Me}) and (\ref{eq:Mm}), i.e., all $w_k>0$.
\end{assumption}
Although the projection from a point $q\in S$ does not necessarily
belong to the convex hull of basis vectors in general,
some application such as mixture models that will be
explained
in Sec.~\ref{sec:nmf} requires this assumption.
We will discuss this restriction in Sec.~\ref{sec:discussassumption}.

\begin{figure}[tbhp]
\centering
\includegraphics[width=8cm, bb=0 0 502 370]{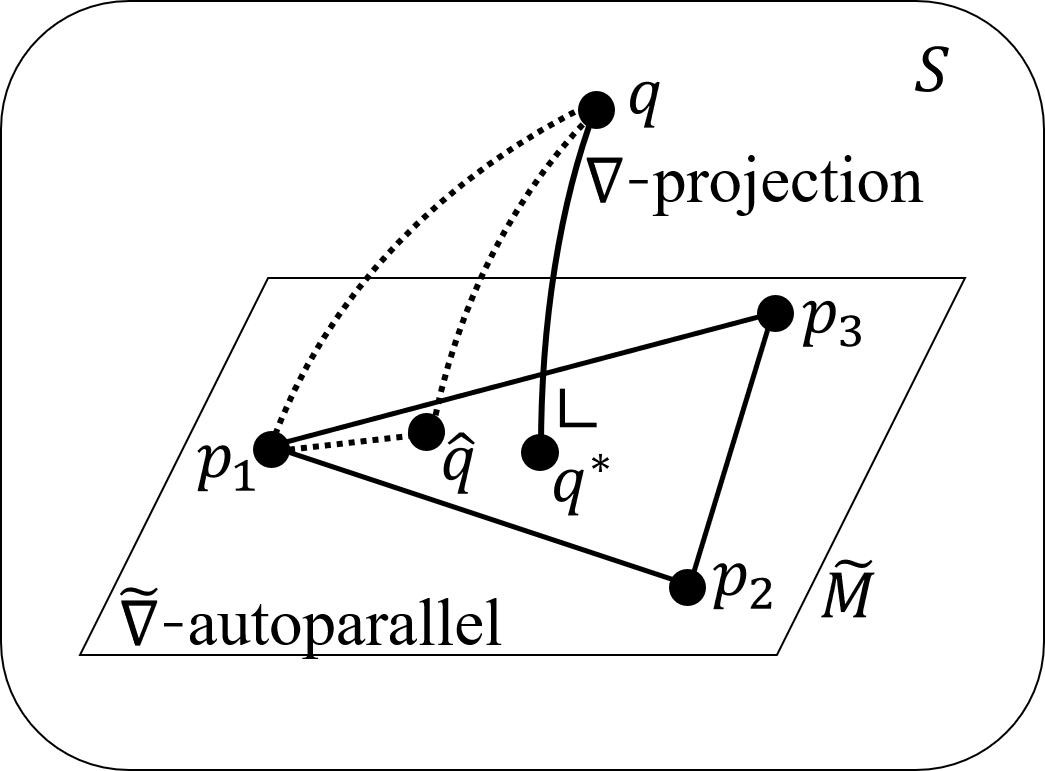}
\caption{The $\nabla$-projection $q^*$ 
from a point $q$ to an $\dual{\nabla}$-autoparallel manifold $\dual{M}$
spanned by $\{p_k\}$, where
$\qcur$ is a current estimate of $\qtrue$. The value $\gamma_k$ defined in (\ref{eq:gammak}) represents
the deviation from Pythagorean relation, i.e., $\gamma_k=0$ iff $\qcur=\qtrue$,
and $\gamma_k>0$ implies $\qcur$ is closer to $p_k$ while $\gamma_k<0$ implies
$\qcur$ is further to $p_k$.
}
\label{fig:projection}
\end{figure}

Suppose a point $q\in S$ and a $\dual{\nabla}$-autoparallel
submanifold $\dual{M}\subseteq S$ are given,
let $\qcur\in \dual{M}$ be a current estimate of the projection $\qtrue\in \dual{M}$ (Fig.~\ref{fig:projection}) and
let us define the quantity $\gamma_k$,
\begin{equation}
\label{eq:gammak}
 \gamma_k =   D(\qcur, q) + D(p_k, \qcur) - D(p_k, q).
\end{equation}
From Eq.~(\ref{eq:pythagorase}), $\gamma_k=0$ if and only if $\qcur = \qtrue$.
If $\gamma_k < 0$, that means $\qcur$ is closer to $p_k$ than $\qtrue$,
$w_k$ should be decreased.
On the other hand, if $\gamma_k > 0$, 
$\qcur$ is farther from $p_k$ than $\qtrue$, $w_k$ should be increased.

From the consideration above, we can construct the Algorithm \ref{alg:AK}
to find the $\nabla$-projection by optimizing weights $\{w_k\}_
{k=1,\ldots,K}$ so that $\qcur$ satisfies the Pythagorean relation (\ref{eq:pythagorase}).

\begin{algorithm}
\caption{Geometrical \Alg{A}{K}} \label{alg:AK}
\begin{algorithmic}[1]
 \State \textbf{Initialize} $\{w_k^{(0)}\}_{k=1,\ldots,K}$
   s.t.\ $\sumkK w_k^{(0)} = 1, w_k^{(0)} > 0, t:=0$
 \Repeat
 \State Calculate $\gamma_k$ by (\ref{eq:gammak}), where
       $\eval{\qcur}=\sum_{i=1}^K w_k^{(t)} \eval{p_k}$, $k=1,\ldots,K$
 \State Update $w_k$, $k=1,\ldots,K$  by
       \begin{equation}
\label{eq:updatew}
	w'_k = w_k^{(t)} f(\gamma_k)
       \end{equation}
 \State Normalize $w_k'$, $k=1,\ldots,K$ by
       \begin{equation}
\label{eq:normalw}
	w_k^{(t+1)} = \frac{w_k'}{\sumkK w_k'}
       \end{equation}
 \State $t:=t+1$
\Until{Stopping criterion is satisfied}
\State \Return $\mathbf{w}$
\end{algorithmic}
\end{algorithm}

In the algorithm, the function $f(\gamma)$ is a positive and strictly
       monotonically increasing function
       s.t.\ $f(0)=1$, which is
       introduced in order to stabilize the algorithm and a typical
choice of $f$ is a sigmoidal function,
\begin{equation}
 f(\gamma) = \frac{2}{1+\exp(-\beta \gamma)}, \quad \beta>0.
\end{equation}
A parameter $\beta$ controls the learning speed and it is related
to convergence characteristics of the algorithm.
\Alg{A}{K} in the case that m-connection is taken as $\nabla$-connection
was firstly introduced by Takano et al.\cite{takano2016} in order to
estimate a nonparametric e-mixture distribution.
The main contribution of this paper is to clarify the relation between
the function $f$ and the convergence property.
In later sections, we prove \Alg{A}{2} (and also \Alg{A}{K})
is locally stable if the derivative of $f$ at the origin is less than
a certain bound. 
For later theoretical analysis, we show the following Lemma here.
\begin{lemma}\label{lemma:gamma}
The value $\gamma_k$ in \Alg{A}{K} is given by 
\begin{equation}
 \label{eq:gammak2}
 \gamma_k = \sumid (\tvali{\qcur}-\tvali{\qtrue})(\evali{\qcur}-\evali{p_k}),
\end{equation}
which means that
$\gamma_k$ only depends on the points on $\dual{M}$, if the true projection $\qtrue$ is known.
\end{lemma}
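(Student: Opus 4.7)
The plan is to use the generalized Pythagorean theorem twice in order to strip the dependence on $q$ entirely. Since $\dual{M}$ is $\dual{\nabla}$-autoparallel, $\qtrue$ is the $\nabla$-projection of $q$ onto $\dual{M}$, and both $p_k$ and $\qcur$ belong to $\dual{M}$, Theorem~1 applied once with $p=p_k$ and once with $p=\qcur$ yields
\begin{equation*}
 D(p_k,q)=D(\qtrue,q)+D(p_k,\qtrue),\qquad D(\qcur,q)=D(\qtrue,q)+D(\qcur,\qtrue).
\end{equation*}
Substituting both into the definition (\ref{eq:gammak}) of $\gamma_k$, the two copies of $D(\qtrue,q)$ cancel, leaving
\begin{equation*}
 \gamma_k = D(\qcur,\qtrue) + D(p_k,\qcur) - D(p_k,\qtrue).
\end{equation*}
This identity already establishes the qualitative statement of the lemma: $\gamma_k$ depends only on the three points $\qcur$, $p_k$, $\qtrue$, all of which lie on $\dual{M}$.

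To obtain the closed form, I would next expand the three divergences on the right using the canonical representation $D(p,q)=\psi(\tval{q})+\phi(\eval{p})-\sumid\evali{p}\tvali{q}$. In the resulting algebraic sum, the two $\psi(\tval{\qtrue})$ contributions (from $D(\qcur,\qtrue)$ and $D(p_k,\qtrue)$) cancel, and the two $\phi(\eval{p_k})$ contributions (from $D(p_k,\qcur)$ and $D(p_k,\qtrue)$) cancel as well. What survives is $\psi(\tval{\qcur})+\phi(\eval{\qcur})$ together with a handful of bilinear terms summed over $i$.

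Finally, invoking the Legendre relation $\psi(\tval{\qcur})+\phi(\eval{\qcur})=\sumid\tvali{\qcur}\evali{\qcur}$ folds the two remaining potentials into the summation, producing exactly four bilinear terms in $\tvali{\cdot}$ and $\evali{\cdot}$ that regroup as $\sumid(\tvali{\qcur}-\tvali{\qtrue})(\evali{\qcur}-\evali{p_k})$, which is the claimed formula (\ref{eq:gammak2}). I do not expect any real obstacle: the only bit of insight is to apply Pythagoras to $\qcur$ as well as to $p_k$, which is what removes the point $q$ from the picture; the remaining steps are routine bookkeeping driven by the Legendre duality between $\psi$ and $\phi$.
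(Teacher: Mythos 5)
Your proof is correct, and it organizes the same two ingredients as the paper in the opposite order, which yields a genuinely different (and arguably cleaner) decomposition. You apply the generalized Pythagorean theorem twice --- once with $p=p_k$ and once with $p=\qcur$, both legitimate since both points lie in $\dual{M}$ --- so that the common term $D(\qtrue,q)$ cancels and you land on the intermediate identity
\begin{equation*}
\gamma_k \;=\; D(\qcur,\qtrue)+D(p_k,\qcur)-D(p_k,\qtrue),
\end{equation*}
eliminating the off-manifold point $q$ \emph{before} any coordinate computation; only afterwards do you expand the divergences and invoke Legendre duality. The paper proceeds the other way around: it first establishes the general algebraic identity $D(r,q)+D(p,r)-D(p,q)=\sumid(\tvali{r}-\tvali{q})(\evali{r}-\evali{p})$ for arbitrary $p,r\in\dual{M}$ and $q\in S$, applies it to write $\gamma_k=\sumid(\tvali{\qcur}-\tvali{q})(\evali{\qcur}-\evali{p_k})$, and only then uses the Pythagorean theorem --- in the form that the residual triples $\gamma(\qcur,q,\qtrue)$ and $\gamma(p_k,q,\qtrue)$ vanish --- to trade $\tvali{q}$ for $\tvali{\qtrue}$. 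Your ordering has the merit that the qualitative conclusion (dependence only on points of $\dual{M}$) is secured before coordinates ever appear, in keeping with the coordinate-free spirit of the algorithm; the paper's ordering produces the reusable three-point identity as a by-product, which it implicitly leans on again in the gradient-descent comparison. Your bookkeeping claims all check out: the two $\psi(\tval{\qtrue})$ terms and the two $\phi(\eval{p_k})$ terms cancel in the stated combination, and folding $\psi(\tval{\qcur})+\phi(\eval{\qcur})=\sumid\tvali{\qcur}\evali{\qcur}$ into the remaining bilinear terms factors exactly into $\sumid(\tvali{\qcur}-\tvali{\qtrue})(\evali{\qcur}-\evali{p_k})$.
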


\begin{proof}
For any $p, r\in \dual{M}$ and $q\in S$, let us define
\begin{align}
\gamma(p,q,r) &= D(r,q) + D(p, r) - D(p, q) \nonumber \\
 &= \expandDm{q}{r} \nonumber \\
 &\quad  \expandDm{r}{p} \nonumber \\
 &\quad - \left\{\expandDm{q}{p}\right\} \nonumber \\
 &= \sumid (\tvali{r}-\tvali{q})(\evali{r}-\evali{p}).
\end{align}
The value $\gamma_k$ is given by
\begin{align}
 \gamma_k =& \gamma(p_k, q, \qcur) = 
  \sumid (\tvali{\qcur}-\tvali{q})(\evali{\qcur}-\evali{p_k})
 \nonumber \\
  &= \sumid (\tvali{\qcur}-\tvali{\qtrue}
  + \tvali{\qtrue}-\tvali{q})(\evali{\qcur}-\evali{p_k})
 \nonumber \\
  &= \sumid (\tvali{\qcur}-\tvali{\qtrue})(\evali{\qcur}-\evali{p_k})
 \nonumber \\
  &\quad + \sumid
 (\tvali{\qtrue}-\tvali{q})
 (\evali{\qcur}-\evali{\qtrue}+\evali{\qtrue}-\evali{p_k})
 \nonumber \\
 &=\sumid (\tvali{\qcur}-\tvali{\qtrue})(\evali{\qcur}-\evali{p_k}) 
- \gamma(\qcur,q,\qtrue) + \gamma(p_k, q, \qtrue).
\end{align}
From the Pythagorean theorem,
 \begin{equation}
  \gamma(\qcur, q, \qtrue) = \gamma(p_k, q, \qtrue) = 0,
 \end{equation}
thus $\gamma_k$ becomes (\ref{eq:gammak2}).
\end{proof}

\section{Stability analysis in the case of $K=2$}

We start the analysis from the simplest case of $K=2$.
As shown later, the case of general $K$ is reduced to this case.
From (\ref{eq:gammak2}), $\gamma_k$ is only depends on the points
on the $\dual{M}$, and if $K=2$, $\dual{M}$ is just a one-dimensional
straight line of $\eta$.

\subsection{Behavior of $\gamma_k$}

In order to derive the condition for convergence, we examine the behavior
of $\gamma_k$ for a small perturbation.

The weight value $w_1$ can be regarded as an $\dual{\nabla}$-coordinate of $\dual{M}$,
and let $u_1$ be the $\nabla$-coordinate that is dual to $w_1$.
Let $\wtrue{1}$ be the value of $w_1$ at the projection point $\qtrue$,
and the current estimation $\wcur{k}(=w_k^{(t)})$
is perturbed slightly from $\wtrue{k}$,
\begin{equation}
\wcur{1} = \wtrue{1}+\epsilon, \quad \wcur{2}=\wtrue{2}-\epsilon=
 (1-\wtrue{1})-\epsilon,
\end{equation}
then from (\ref{eq:gammak2}), the value $\gamma_1$ is given by
\begin{equation}
 \gamma_1 = (w_1(\qcur)-w_1(p_1))(u_1(\qcur)-u_1(\qtrue)) =
  (\wtrue{1}+\epsilon-1)\Delta u_1,
\end{equation}
where $w_1(\qtrue)=\wtrue{1}$ and $w_1(p_1)=1$ are the $w_1$ value at
$\qtrue$ and $p_1$ respectively, and
\begin{equation}
\Delta u_1 = u_1(\qcur)-u_1(\qtrue).
\end{equation}
When $\epsilon$ is small, it can be expanded upto the first order of $\epsilon$,
\begin{equation}
 \Delta u_1 = g(w_1) \Delta w_1 + o(\epsilon),
\end{equation}
where
\begin{equation}
 \Delta w_1 = \wcur{1} - \wtrue{1} = \epsilon,
\end{equation}
 $g(w_1)$ is Jacobian that is equal to Riemannian metric
\begin{equation}
 g(w_1) = \frac{\partial u_1}{\partial w_1},
\end{equation}
and is also obtained by
\begin{equation}
 g(w_1)
  = \mathrm{E}_{w_1} \left[\left(\frac{\partial\log p(x;w_1)}{\partial w_1}
		    			   \right)^2\right]
  = -\mathrm{E}_{w_1} \left[\frac{\partial^2\log p(x;w_1)}{\partial w_1{}^2}
		    			   \right].
\end{equation}
As a result, we have
\begin{equation}
\label{eq:gamma1}
 \gamma_1 = g(\wtrue{1})(\wtrue{1}-1)\epsilon + o(\epsilon).
\end{equation}

Similarly,
\begin{equation}
 \gamma_2 = g(\wtrue{1})\wtrue{1}\epsilon + o(\epsilon).
\end{equation}

\subsection{The condition for local stability of the \Alg{A}{2}}
\label{sec:stability2}

In this section, we show the condition for local convergence property
of the \Alg{A}{2}. Here we call the algorithm is locally stable 
when the amount of sufficiently small perturbation from the optimal
solution is decreased by the algorithm.

\begin{theorem}
\Alg{A}{2} is locally stable when it holds
\begin{equation}
\label{eq:bound0}
\dgamma0 < \frac{2}{\wtrue{1}(1-\wtrue{1}) g(\wtrue{1})},
\end{equation}
where $\wtrue{1}$ is the optimal weight.
\end{theorem}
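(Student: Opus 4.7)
The plan is to linearize one iteration of \Alg{A}{2} around the optimal weights $\mathbf{w}^*$ and to identify the scalar contraction factor $\lambda$ acting on the perturbation $\epsilon := \wcur{1}-\wtrue{1}$. Because of the sum-to-one constraint, the dynamics live on a one-dimensional affine slice, so the linearized recursion has the form $\epsilon^{(t+1)} = \lambda\,\epsilon^{(t)} + o(\epsilon^{(t)})$, and local stability is equivalent to $|\lambda|<1$.

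To compute $\lambda$, I would Taylor-expand $f$ at the origin as $f(\gamma) = 1 + f'(0)\gamma + O(\gamma^2)$ (using $f(0)=1$) and substitute the first-order expressions $\gamma_1 = g(\wtrue{1})(\wtrue{1}-1)\epsilon + o(\epsilon)$ and $\gamma_2 = g(\wtrue{1})\wtrue{1}\epsilon + o(\epsilon)$ derived just above. Expanding $w'_1 = (\wtrue{1}+\epsilon)f(\gamma_1)$ and $w'_2 = ((1-\wtrue{1})-\epsilon)f(\gamma_2)$ to first order in $\epsilon$ and summing, the $O(\epsilon)$ contributions combine with coefficient $\wtrue{1}(\wtrue{1}-1)+\wtrue{2}\wtrue{1} = \wtrue{1}(\wtrue{1}+\wtrue{2}-1) = 0$, so the normalizer satisfies $w'_1 + w'_2 = 1 + O(\epsilon^2)$. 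Since $w'_1 = O(1)$, dividing by this normalizer perturbs $w'_1$ only at order $\epsilon^2$, so the normalization step is invisible to first order and $w_1^{(t+1)}-\wtrue{1} = w'_1 - \wtrue{1} + O(\epsilon^2)$.

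Collecting terms yields $\lambda = 1 - \wtrue{1}(1-\wtrue{1})\,f'(0)\,g(\wtrue{1})$. Since $f$ is positive and strictly increasing with $f(0)=1$ we have $f'(0)\ge 0$, and since $g(\wtrue{1})>0$ and $\wtrue{1}\in(0,1)$ by Assumption~\ref{assumption}, the upper bound $\lambda \le 1$ is automatic; the only binding constraint is $\lambda > -1$, which rearranges to the claimed inequality $\dgamma0 < 2/(\wtrue{1}(1-\wtrue{1})g(\wtrue{1}))$. The main obstacle I anticipate is verifying the cancellation in the normalizer: this cancellation relies simultaneously on $\wtrue{1}+\wtrue{2}=1$ and on the specific coefficients linking $\gamma_1,\gamma_2$ to $\epsilon$ supplied by Lemma~\ref{lemma:gamma}, and without it $\lambda$ would acquire an extra drift term that would shift the stability threshold.
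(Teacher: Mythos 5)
Your proposal is correct and follows essentially the same route as the paper: expand $w_1' = \wcur{1}f(\gamma_1)$ to first order in $\epsilon$ using the expressions for $\gamma_1,\gamma_2$, observe that $w_1'+w_2'=1$ up to first order so the normalization is negligible, and read off the contraction factor $1-\wtrue{1}(1-\wtrue{1})f'(0)g(\wtrue{1})$, whose magnitude being below $1$ gives exactly the stated bound. Your explicit cancellation check on the normalizer and the remark that only the $\lambda>-1$ side binds are just slightly more spelled-out versions of what the paper does.
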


\begin{proof}
By the \Alg{A}{2}, the weight $w_1$ is updated by
\begin{equation}
 w_1' = \wcur{1}f(\gamma_1),
\end{equation}
and its first order expansion is given from Eq.~(\ref{eq:gamma1}) by
\begin{align}
 w_1' &= \wcur{1}(1 + \dgamma0\gamma_1) + o(\epsilon) \nonumber \\
 &=\wtrue{1} + \epsilon + \wtrue{1} \dgamma0 g(\wtrue{1})
 (\wtrue{1}-1)\epsilon + o(\epsilon), \label{eq:updatew1}
\end{align}
and for $w_2$,
\begin{align}
\label{eq:updatew2}
 w_2' &= \wcur{2}f(\gamma_2) \nonumber \\
 &= 1-\wtrue{1} - \epsilon +
 (1-\wtrue{1})g(\wtrue{1})\wtrue{1}\dgamma0\epsilon + o(\epsilon).
\end{align}

We see that $w_1' + w_2'=1+o(\epsilon)$, thus the normalization
procedure is negligible up to the first order of $\epsilon$.

The condition that $\qtrue$ is a stable point of the algorithm is
given by 
\begin{equation}
 |w_1' - \wtrue{1}| < |\wcur{1}-\wtrue{1}|=|\epsilon|.
\end{equation}
From Eq.~(\ref{eq:updatew1}), it is
\begin{equation}
 |\epsilon + \wtrue{1} \dgamma0 g(\wtrue{1})
 (\wtrue{1}-1)\epsilon| < |\epsilon|,
\end{equation}
which is equivalent to
\begin{equation}
\wtrue{1} \dgamma0 g(\wtrue{1})
 (1-\wtrue{1}) < 2,
\end{equation}
then we have
\begin{equation}
\dgamma0 < \frac{2}{\wtrue{1} (1-\wtrue{1}) g(\wtrue{1}) }.
\end{equation}
\end{proof}

Since the true value $\qtrue$ is not known when the algorithm is
applied, we have two approaches.
The one is approximating $\wtrue{1}$ by the current estimate $\wcur{1}$
and use adaptively changing the derivative of $f$,
which will be examined in sec.\ref{sec:adaptive}.
The other approach is to use a bound that is independent of $\wtrue{1}$,
which is available in some special cases.

\begin{corollary}
\Alg{A}{2} is locally stable when it holds
\begin{equation}
\label{eq:bound}
\dgamma0 < \frac{2}{\sup_{w} w (1-w) g(w)},
\end{equation}
 where we denote $w=w_1$ for simplicity.
\end{corollary}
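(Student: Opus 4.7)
The plan is to derive this corollary as an immediate weakening of the preceding theorem, trading the unknown optimum $\wtrue{1}$ for a supremum that the practitioner can in principle evaluate without knowing the projection in advance. The main observation is that the theorem's bound $2/(\wtrue{1}(1-\wtrue{1})g(\wtrue{1}))$ depends on $\wtrue{1}$ only through the scalar quantity $w(1-w)g(w)$ evaluated at $w=\wtrue{1}$, and under Assumption~\ref{assumption} the optimal weight satisfies $\wtrue{1}\in(0,1)$, so $\wtrue{1}$ lies in the domain over which the supremum in (\ref{eq:bound}) is taken.

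First I would note that $\wtrue{1}(1-\wtrue{1})g(\wtrue{1}) \le \sup_{w\in(0,1)} w(1-w)g(w)$ by definition of the supremum. Taking reciprocals reverses the inequality, giving
\begin{equation*}
\frac{2}{\sup_{w} w(1-w)g(w)} \le \frac{2}{\wtrue{1}(1-\wtrue{1})g(\wtrue{1})}.
\end{equation*}
Therefore any $\dgamma0$ satisfying (\ref{eq:bound}) automatically satisfies (\ref{eq:bound0}), and local stability follows from the theorem.

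The only subtlety worth commenting on is ensuring the supremum is finite and strictly positive so that the bound in (\ref{eq:bound}) is well-defined: positivity is immediate since $w(1-w)>0$ and $g(w)>0$ on $(0,1)$ (the Riemannian metric is positive definite), while finiteness is the genuine requirement on the model, and this is precisely what makes the corollary useful only in the special cases alluded to in the paragraph preceding the statement. I would mention that this uniform bound is looser than (\ref{eq:bound0}) and hence potentially more conservative, motivating the adaptive approach referenced in Sec.~\ref{sec:adaptive}, but the advantage is that it requires no knowledge of $\qtrue$. I expect no technical obstacle; the corollary is essentially a one-line consequence of the theorem, and the proof proposal is simply to record this monotonicity argument.
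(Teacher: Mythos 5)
Your argument is correct and matches the paper's (implicit) reasoning: the paper states this corollary without proof, treating it as an immediate consequence of the theorem via exactly the monotonicity observation you record, namely that $\wtrue{1}(1-\wtrue{1})g(\wtrue{1})\le\sup_w w(1-w)g(w)$ so the uniform bound implies the pointwise one. Your additional remarks on finiteness and positivity of the supremum are sensible but not needed beyond what the paper assumes.
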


\section{Special case: discrete distribution}

In the following subsections, 
we give specific forms of the bound $df(0)/d\gamma$ of Eq.~(\ref{eq:bound})
both for the e-projection and m-projection
by considering a discrete distribution as a specific case.

The discrete distribution is given by
\begin{equation}
 q(x) = \sumid q_i \delta_i(x), \quad x\in \{1,2,\ldots,d\},
\end{equation}
\begin{equation}
 \sumid q_i = 1, \quad q_i\ge0.
\end{equation}
where $\delta_i(x)=1$ when $x=i$ and $\delta_i(x)=0$ otherwise.
We see that the discrete distribution belongs to the exponential family as follows:
\begin{align}
 q(x) &= \exp\left(\sumid (\log q_i) \delta_i(x)\right) \nonumber \\
 &= \exp\left(\sumidn (\log q_i) \delta_i(x) +
 (\log q_d)
 \left(1-\sumidn \delta_i(x)\right)\right) \nonumber \\
 &= \exp\left(\sumidn \log\frac{q_i}{q_d}\delta_i(x)
  + \log q_d\right),
\end{align}
where we have $d-1$ independent parameters $q_1,\ldots,q_{d-1}$ and
one dependent parameter $q_d$ is given by $q_d=1-\sumidn q_i$.
 By taking the e-connection as the $\nabla$-connection, $q(x)$ becomes
 the same form as
 Eq.~(\ref{eq:exp}) by regarding
 \begin{equation}
\label{eq:xi}
  F_i(x) = \delta_i(x), \quad \xi_i = \log\frac{q_i}{q_d},
   \quad b(\xi) = -\log q_d, \quad i=1,\ldots,d-1.
 \end{equation}
 The dual coordinate $\zeta_i$ is given by
 \begin{equation}
\label{eq:zeta}
  \zeta_i = \mathrm{E}_{q(x)}[F_i(x)] = q_i, \quad i=1,\ldots,d-1.
 \end{equation}
The basis vectors in $S$ are denoted by
\begin{equation}
 p_k(x) = \sumid p_{ki}\delta_i(x),\quad
 \sumid p_{ki} = 1, \quad p_{ki}\ge0.
\end{equation}

\subsection{The case of e-projection}
\label{sec:nmf}

First, we take the e-connection as the $\nabla$-connection, then the $\nabla$-projection
onto the $\dual{\nabla}$-autoparallel submanifold is the e-projection
onto the m-autoparallel submanifold.

The m-autoparallel submanifold spanned by $p_k(x)$ is given by a set
of points whose m-coordinate (\ref{eq:zeta}) is given by
\begin{equation}
 \zeta_i = \sumkK w_k p_{ki},\quad i=1,\ldots,d-1.
\end{equation}
Since $\zeta_i$ is the probability value, 
it is equivalent to the mixture distribution of $\{p_k(x)\}$
\begin{equation}
\label{eq:mix}
 p(x;\mathbf{w}) = \sumkK w_k p_k(x),\quad \sumkK w_k=1,
\end{equation}
where $w_k$ is usually assumed to be positive, which matches
the Assumption \ref{assumption}.

The mixture distribution has a lot of applications, in which
complicated distribution is decomposed into sum of
simple component distributions. An important application in the
discrete distribution case is the nonnegative matrix factorization\cite{lee1999},
where a matrix $X$ with nonnegative components is approximated by
\begin{equation}
 X \simeq D C,
\end{equation}
where $D$ and $C$ are also matrices with nonnegative components.
Let $\Pi$ be the normalization operator by which sum of each column 
components become 1. 
It is known\cite{dong2014nonnegative} that if $X=DC$,
there exist $P$ and $W$ such that
\begin{equation}
 \Pi(X) = P W,
\end{equation}
where $P$ and $W$ are matrices with nonnegative components and
sum of each column components is 1.
This means that a set of probability distributions are approximated
by mixture of factor distributions.
In the NMF, $D$ and $C$ are optimized alternatively by fixing the other.
Each optimization problem can be regarded as e-projection to m-autoparallel
manifold.

Note that we consider the e-projection onto an m-autoparallel submanifold
in this paper, since it is natural from the generalized Pythagorean
relation. However, many learning algorithms are
formulated to maximum likelihood
that is equivalent to the m-projection, which is different from
e-projection in the sense that the argument of divergence is reversed.
For the discrete distribution case, 
the m-projection to the m-autoparallel submanifold has a unique solution,
but it does not hold in general.

Now we give a sufficient condition for convergence of the e-projection
onto the m-autoparallel submanifold.
\begin{proposition}
The \Alg{A}{2}
of the e-projection onto an m-autoparallel submanifold
 for the discrete distribution locally stable
if
 \begin{equation}
 \dgamma0 < \frac{2}{\sum_i(\sqrt{p_{1i}}-\sqrt{p_{2i}})^2},
 \end{equation}
 where the right hand side has a constant lower bound $\sqrt{2}$.
\end{proposition}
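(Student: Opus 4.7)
The plan is to specialize the Corollary of Section~\ref{sec:stability2} to the one--dimensional mixture family generated by two discrete distributions, so the task reduces to writing down the Fisher information $g(w)$ of the binary mixture $p(x;w)=w\,p_1(x)+(1-w)\,p_2(x)$ and uniformly bounding $w(1-w)g(w)$ from above in terms of $\sumid(\sqrt{p_{1i}}-\sqrt{p_{2i}})^2$.

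First I would differentiate $\log p(x;w)$ in $w$ and take the variance of the score to obtain
\[
 g(w) = \sumid\frac{(p_{1i}-p_{2i})^2}{w\,p_{1i}+(1-w)\,p_{2i}},
\]
so that $w(1-w)g(w)$ is a sum of terms of the form $w(1-w)(a-b)^2/(wa+(1-w)b)$ with $a=p_{1i}$, $b=p_{2i}$. The essential step is then the pointwise inequality
\[
 \frac{w(1-w)(a-b)^2}{wa+(1-w)b}\le(\sqrt{a}-\sqrt{b})^2,\qquad a,b\ge 0,\ w\in(0,1).
\]

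I would verify this by factoring $(a-b)^2=(\sqrt{a}-\sqrt{b})^2(\sqrt{a}+\sqrt{b})^2$, reducing the claim to $w(1-w)(\sqrt{a}+\sqrt{b})^2\le wa+(1-w)b$. Applying the variance identity to the values $\sqrt{a},\sqrt{b}$ with weights $w,1-w$,
\[
 wa+(1-w)b = \bigl(w\sqrt{a}+(1-w)\sqrt{b}\bigr)^2 + w(1-w)(\sqrt{a}-\sqrt{b})^2,
\]
the difference $wa+(1-w)b - w(1-w)(\sqrt{a}+\sqrt{b})^2$ collapses to $\bigl(w\sqrt{a}+(1-w)\sqrt{b}\bigr)^2 - 4w(1-w)\sqrt{ab}$, which is nonnegative by AM--GM applied to $w\sqrt{a}$ and $(1-w)\sqrt{b}$.

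Summing the pointwise bound over $i$ yields $\sup_w w(1-w)g(w)\le \sumid(\sqrt{p_{1i}}-\sqrt{p_{2i}})^2$, and invoking the Corollary gives the stated sufficient condition on $\dgamma0$. For the universal lower bound on the right-hand side, I would note $\sumid(\sqrt{p_{1i}}-\sqrt{p_{2i}})^2 = 2 - 2\sumid\sqrt{p_{1i}p_{2i}}\le 2$ since the Bhattacharyya coefficient lies in $[0,1]$ by Cauchy--Schwarz, so the RHS is bounded below by a positive constant independent of $p_1,p_2$. The main obstacle is spotting the right term-wise inequality; once the variance decomposition plus AM--GM is in place, the rest is a direct specialization of the general theory and requires no further structure.
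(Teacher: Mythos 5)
Your proof is correct and follows essentially the same route as the paper: specialize the Corollary of Section~\ref{sec:stability2} to the two-component mixture, compute $g(w)$ as the variance of the score, bound each summand of $w(1-w)g(w)$ by $(\sqrt{p_{1i}}-\sqrt{p_{2i}})^2$, and sum. The only difference is cosmetic --- the paper obtains the term-wise bound by exhibiting the exact maximizer $w=\sqrt{p_{2i}}/(\sqrt{p_{1i}}+\sqrt{p_{2i}})$ of each term, whereas you prove the same inequality algebraically via the variance identity plus AM--GM; both arguments are valid. One remark: your computation $\sumid(\sqrt{p_{1i}}-\sqrt{p_{2i}})^2=2-2\sumid\sqrt{p_{1i}p_{2i}}\le 2$ gives the right-hand side a lower bound of $1$, not the $\sqrt{2}$ asserted in the statement; since the value $2$ is attained when $p_1$ and $p_2$ have disjoint supports, $1$ is in fact the sharp constant and the stated $\sqrt{2}$ appears to be an error in the paper (likely from conflating the squared Hellinger distance, bounded by $2$, with the Hellinger distance itself, bounded by $\sqrt{2}$).
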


\begin{proof}
The m-autoparallel model spanned by $K=2$ points can be written as
 \begin{equation}
 p(x;w) = w p_1(x) + (1-w) p_2(x).
\end{equation}
The Riemannian metric at $p(x;w)$ is given by
\begin{align}
g(w)  &=
 \mathrm{E}_{w} \left[\left(\frac{\partial\log p(x;w)}{\partial w}\right)^2\right]
 \nonumber\\
 &= \sum_{x=1}^d \frac{1}{p(x;w)}
 \left(\frac{\partial
 p(x;w)}{\partial w}\right)^2
\nonumber \\
 &= \sum_{x=1}^d  \frac{(p_1(x)-p_2(x))^2}{p(x;w)}
 \nonumber \\
 &= \sum_{i=1}^d  \frac{(p_{1i}-p_{2i})^2}{w p_{1i}+(1-w)p_{2i}}.
\end{align}
The denominator of right hand side of Eq.~(\ref{eq:bound}) is 
\begin{equation}
\label{eq:boundm}
\sup_w w (1-w) 
\sum_{i=1}^d  \frac{(p_{1i}-p_{2i})^2}{w p_{1i} +
  (1-w) p_{2i}}.
\end{equation}
The $i$-th term
\begin{equation}
w (1-w) 
\frac{(p_{1i}-p_{2i})^2}{w p_{1i} +
  (1-w) p_{2i}}
\end{equation}
has maximum value $(\sqrt{p_{1i}}-\sqrt{p_{2i}})^2$
when $w=\sqrt{p_{2i}}/(\sqrt{p_{1i}}+\sqrt{p_{2i}})$, then
Eq.~(\ref{eq:boundm}) is bounded from upper by
\begin{equation}
 \sum_i(\sqrt{p_{1i}}-\sqrt{p_{2i}})^2,
\end{equation}
 which is a Hellinger distance between $p_1(x)$ and $p_2(x)$,
and  we obtain the sufficient condition for local stability,
\begin{equation}
 \dgamma0 < \frac{2}{\sum_i(\sqrt{p_{1i}}-\sqrt{p_{2i}})^2},
\end{equation}
and the right hand side has a constant lower bound $\sqrt{2}$.
\end{proof}

\subsection{The case of m-projection}

In this subsection, we take the m-connection as the $\nabla$-connection,
then the $\nabla$-projection
onto the $\dual{\nabla}$-autoparallel submanifold is the m-projection
onto the e-autoparallel submanifold.

The e-autoparallel submanifold spanned by $p_k(x)$ is given by a set
of points whose e-coordinate (\ref{eq:xi}) is given by
\begin{equation}
 \xi_i = \sumkK w_k \log\frac{p_{ki}}{p_{kd}}
  = \left(\sumkK w_k \log p_{ki}\right) - \log p_{kd},\quad i=1,\ldots,d-1.
\end{equation}
Since $\xi_i=\log (q_i / q_d)$, it is equivalent to the model
specified by
\begin{equation}
 p(x;\mathbf{w})\propto \exp\left(\sumkK w_k \log p_k(x)\right),
  \quad \sumkK w_k=1,
\end{equation}
which is a different type of mixture, log linear mixture.

We call this type of mixture as e-mixture, while the mixture
specified by Eq.~(\ref{eq:mix}) as m-mixture.
Although the e-mixture has not been studied as intensively as the m-mixture,
it has several good properties such as maximum entropy principle.
Takano et al.\cite{takano2016} proposed a nonparametric extension of
the e-mixture and its learning algorithm based on the geometrical algorithm,
which is generalized in this paper.
In the nonparametric e-mixture estimation, the basis distributions are
expressed by the empirical distribution (i.e., sum of delta functions), 
thus the e-mixture of basis distibutions cannot mathematically defined.
Instead,  it is defined by geometrical characteristics
of e-mixture\cite{murata2009}. Therefore, it is not possible to obtain
the coordinate explicitly. Because the geometrical algorithm is coordinate-free, and it only requires to calculate divergences, which can be
estimated based on nonparametric entropy estimation\cite{Hino201572,DBLP:conf/iconip/HinoAM16}.
This is a strong motivation to propose the geometrical algorithm.

Here we give a sufficient condition for convergence of the m-projection
onto the e-autoparallel submanifold.
\begin{proposition}
The \Alg{A}{2}
of the m-projection onto the e-autoparallel submanifold
 for the discrete distribution is locally stable if
\begin{equation}
\label{eq:bounde}
 \dgamma0 \le \frac{32}{\displaystyle \left(\max_i \displaystyle \log \frac{p_{1i}}{p_{2i}} -
  \min_i \log \frac{p_{1i}}{p_{2i}}\right)^2}.
\end{equation}
\end{proposition}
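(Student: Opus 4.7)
My plan is to mirror the structure of the previous proposition (the e-projection case), starting from the Corollary in Section~\ref{sec:stability2}. Because the m-connection is now taken as the $\nabla$-connection, the $\dual{\nabla}$-autoparallel submanifold spanned by $p_1,p_2$ has $\dual{\nabla}$-coordinate $w\,\xi(p_1)+(1-w)\,\xi(p_2)$, so after normalization its members form the e-mixture
\[
p(x;w)\ \propto\ p_1(x)^{w}\,p_2(x)^{1-w}.
\]
This is an exponential family in the natural parameter $w$ with sufficient statistic $T(x)=\log(p_1(x)/p_2(x))$ and base measure $p_2(x)$. Writing $T_i=\log(p_{1i}/p_{2i})$ and using the standard fact that, for an exponential family, the Fisher information equals the variance of the sufficient statistic under the current distribution, I would compute
\[
g(w)\ =\ \mathrm{Var}_{p(\cdot;w)}[T]\ =\ \sum_{i=1}^{d} p_i(w)\bigl(T_i-\bar T_w\bigr)^2,\qquad \bar T_w=\sum_i p_i(w)\,T_i.
\]
This step is a direct exponential-family calculation analogous to the Riemannian-metric computation in Section~\ref{sec:nmf}.

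Next I would invoke the Corollary, whose sufficient stability condition is $df(0)/d\gamma<2/\sup_w w(1-w)g(w)$, and bound the denominator by combining two elementary inequalities. First, $w(1-w)\le 1/4$ on $[0,1]$. Second, Popoviciu's inequality on variances, which states that any random variable supported on an interval of length $L$ has variance at most $L^2/4$. Applied to $T$, which takes only the values $T_i$, this yields $g(w)\le (\max_i T_i-\min_i T_i)^2/4$ uniformly in $w$. Multiplying the two bounds,
\[
\sup_w w(1-w)\,g(w)\ \le\ \frac{(\max_i T_i-\min_i T_i)^2}{16},
\]
and substituting into the Corollary delivers the stated bound $df(0)/d\gamma\le 32/(\max_i T_i-\min_i T_i)^2$.

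I do not expect a deep obstacle. The only point requiring care is bookkeeping: because the roles of $\nabla$ and $\dual{\nabla}$ are interchanged relative to Section~\ref{sec:nmf}, I must verify that the weight $w$ parameterizing the e-mixture coincides with the $\dual{\nabla}$-coordinate $w_1$ used in Section~\ref{sec:stability2}, so that the scalar metric $g(w)$ appearing in the Corollary is indeed the Fisher information just computed. Conceptually, unlike the m-mixture case of Section~\ref{sec:nmf}, where the summand of $g(w)$ decoupled and a term-by-term optimum produced a tight Hellinger-type bound, here the variance couples the indices and no term-by-term optimization is available; Popoviciu's inequality is the natural substitute and is precisely what explains the universal constant $16$ (hence the $32$) in the final bound.
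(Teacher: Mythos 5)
Your proposal is correct and follows essentially the same route as the paper: identify the e-mixture as a one-parameter exponential family with sufficient statistic $\log(p_1(x)/p_2(x))$, compute $g(w)$ as its variance, bound it uniformly in $w$ by Popoviciu's inequality, and combine with $w(1-w)\le 1/4$ to obtain the factor $32$. No substantive difference from the paper's argument.
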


The right hand side does not have a constant lower bound unlike the
e-projection
case, and it is left as an open problem whether there exists any constant
bound.

\begin{proof}
The e-autoparallel model for $K=2$ is written as
\begin{equation}
 p(x;w) = \frac{1}{Z(w)} \exp(w \log p_1(x) + (1-w) \log p_2(x)
),
\end{equation}
where $w$ is an e-coordinate, $Z(w)$ is a normalization constant
\begin{equation}
 Z(w) = \sum_{x=1}^d \exp( w \log p_1(x) + (1-w) \log p_2(x) ).
\end{equation}

Since the discrete distribution
\begin{equation}
 \log p_k(x) = \sum_{i=1}^d \log p_{ki} \delta_{i}(x),
\end{equation}
$p(x;w)$ is written as
\begin{align}
 p(x;w) &= \frac{1}{Z(w)}\exp\left(\sumid(
 w\log p_{1i} + (1-w)\log p_{2i})\delta_i(x)\right) \nonumber \\
 &=\frac{1}{Z(w)}\exp\left(\sumid (a_i w + b_i)\delta_i(x)\right),
\end{align}
where
\begin{equation}
 a_i = \log (p_{1i}/p_{2i}), \quad b_i = \log p_{2i},
\end{equation}
\begin{equation}
 Z(w) = \sumid c_i(w), \quad c_i(w) = \exp(a_i w + b_i).
\end{equation}
Note that $p(i;w) = c_i(w) / Z(w)$

The Fisher information for this model can be calculated by
\begin{align}
 g(w) &= - \mathrm{E}_w\left[\frac{\partial^2 \log p(x;w)}%
 {\partial w^2}\right] \nonumber \\
 &= \frac{1}{Z(w)}\frac{\partial^2 Z(w)}{\partial w^2}-
 \left(\frac{1}{Z(w)}\frac{\partial Z(w)}{\partial w}\right)^2
 \nonumber \\
 &= \sumid \frac{a_i^2 c_i(w)}{Z(w)} -
 \left(\sumid \frac{a_i c_i(w)}{Z(w)}\right)^2 \nonumber \\
 &= \sumid a_i^2 p(i;w) - \left(\sumid a_i p(i;w)\right)^2
\end{align}
The last formula represents the variance of $a_i$
with respect to the probability weight $p(i;w)$.
From Popoviciu's inequality on variances\cite{popoviciu1935},
$g(w)$ has an upper bound that is independent of $w$,
\begin{equation}
 g(w)\le \frac{1}{4}(\max_i a_i - \min_i a_i)^2.
\end{equation}
Since $w(1-w)\le 1/4$, we obtain the inequality (\ref{eq:bounde}) of the Proposition from Eq.~(\ref{eq:bound}).
\end{proof}

\section{Local stability for general $K$}

We proceed to the general case which include $K\ge 2$.
First we present the main theorem.

\begin{theorem}
\label{th:main}
Let $w_k^*$, $k=1,\ldots, K$ be the optimal parameter.
If the function $f$ satisfies
\begin{equation}
\label{eq:boundK}
\dgamma0 < \frac{2}{K \max_k w_k^* (1-w_k^*) g(w_k^*)},
\end{equation}
 \Alg{A}{K} is locally stable.
\end{theorem}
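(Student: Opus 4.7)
The plan is to linearize the iteration around $\qtrue$, verify that the normalization step contributes only second-order corrections, and then bound the contraction of the resulting linear map on the simplex-constraint subspace $\{\epsilon \in \mathbb{R}^K : \sum_k \epsilon_k = 0\}$.

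\textbf{Step 1 (first-order expansion of $\gamma_k$).} I would set $\wcur{k}=\wtrue{k}+\epsilon_k$ with $\sum_k\epsilon_k=0$, so that $\evali{\qcur}-\evali{\qtrue}=\sum_l\epsilon_l\,\evali{p_l}$. Lemma~\ref{lemma:gamma} then gives $\gamma_k=\sum_i(\tvali{\qcur}-\tvali{\qtrue})(\evali{\qcur}-\evali{p_k})$; expanding $\theta$ in $\eta$ via the inverse Fisher metric at $\qtrue$ realizes $\gamma=H\epsilon$ as a linear functional of $\epsilon$ on the constraint subspace, for some $K\times K$ matrix $H$ determined by the ambient geometry at $\qtrue$ and by the $\eta$-coordinates of the $\{p_k\}$.

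\textbf{Step 2 (normalization is second order).} Writing $f(\gamma_k)=1+\dgamma0\,\gamma_k+O(\epsilon^2)$, the update becomes $w_k'=\wtrue{k}+\epsilon_k+\wtrue{k}\dgamma0\,\gamma_k+O(\epsilon^2)$. A short computation gives $\sum_k\wtrue{k}\gamma_k=\sum_i\Delta\theta_i\Delta\eta_i=O(\epsilon^2)$, hence $\sum_k w_k'=1+O(\epsilon^2)$, so the normalization step (\ref{eq:normalw}) can be dropped at first order, exactly as in the $K=2$ analysis.

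\textbf{Step 3 (reduction to $K=2$ subfamilies).} For each $k$ I would introduce the ``$k$-vs-rest'' one-parameter mixture $p_\lambda^{(k)}=\lambda p_k+(1-\lambda)p_{\bar k}^*$ with $p_{\bar k}^*:=(1-\wtrue{k})^{-1}\sum_{l\neq k}\wtrue{l}p_l$. This family lies in $\dual{M}$, passes through $\qtrue$ at $\lambda=\wtrue{k}$, and its induced Fisher information at that point equals the scalar $g(\wtrue{k})$ in the statement. Applying Theorem~2 to the 1-parameter submanifold $\{p_\lambda^{(k)}\}$ shows that the contraction factor along this ``$k$-vs-rest'' direction is $|1-\dgamma0\,\wtrue{k}(1-\wtrue{k})g(\wtrue{k})|$, which is the ingredient to be leveraged.

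\textbf{Step 4 (aggregation over $K$ directions).} A general $\epsilon$ in the constraint subspace decomposes into contributions along the $K$ ``$k$-vs-rest'' directions. Bounding each contribution by Step~3 and invoking the triangle inequality over the $K$ summands yields the sufficient criterion $\dgamma0\cdot K\cdot\max_k\wtrue{k}(1-\wtrue{k})g(\wtrue{k})<2$, which is precisely (\ref{eq:boundK}). The hard part will be Step~4: the iteration does not preserve the individual ``$k$-vs-rest'' directions---a perturbation initially along one such direction generically leaks into the others after a single update---so the decomposition has to be combined carefully with a worst-case bound across $k$ to produce the factor $K$ (rather than $K-1$ or something worse). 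The resulting bound is conservative: specializing to $K=2$ it is a factor of two weaker than Theorem~2, reflecting the slack introduced by this triangle-inequality aggregation.
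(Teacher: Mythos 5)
Your Steps 1--3 track the paper's strategy fairly closely: the paper also reduces the problem to $K=2$ along ``$k$ versus the rest'' directions (it uses the point $p_k^\dag$ where the line from $p_k$ through $\qcur$ exits the simplex, rather than your $p^*_{\bar k}$ anchored at the optimum, but to first order these agree), and it likewise discards the normalization at first order. The genuine gap is Step 4, which you correctly flag as the hard part but for which the mechanism you propose would not go through. Decomposing the perturbation $\epsilon$ over the $K$ ``$k$-vs-rest'' directions and applying the triangle inequality fails for two reasons: those $K$ directions span only the $(K-1)$-dimensional constraint subspace, so the decomposition is non-unique and its coefficients are uncontrolled (when some $\wtrue{k}$ approaches $0$ or $1$ the directions become nearly parallel and the coefficients can be arbitrarily large relative to $\|\epsilon\|$); and, as you note yourself, the linearized iteration does not preserve these directions, so contraction of each coefficient separately is not the quantity that controls $\|\epsilon'\|$. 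A triangle inequality over such a decomposition cannot by itself deliver the clean factor $K$.

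The paper's resolution is to decompose the \emph{update} rather than the perturbation. It first shows (final lemma of the appendix) that one step of \Alg{A}{K} equals, to first order in $\epsilon$, the sum over $k$ of ``one-sided component-wise'' updates $\Delta\mathbf{w}_{(k)}$, where the $k$-th of these is the $K=2$ update between $p_k$ and $p_k^\dag$ followed by renormalization, computed from the \emph{full} current iterate $\qcur$ (not from a component of $\epsilon$). Each such update is individually contractive, $\|\hat{\mathbf{w}}+\Delta\mathbf{w}_{(k)}-\mathbf{w}^*\|<\|\hat{\mathbf{w}}-\mathbf{w}^*\|$, whenever $\dgamma{0}$ meets the $K=2$ bound for index $k$. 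Imposing the extra factor $1/K$ on $\dgamma{0}$ makes the actual step equal to $\frac{1}{K}\sum_k\Delta\mathbf{w}_{(k)}$, so the new iterate is the \emph{average} of the $K$ points $\hat{\mathbf{w}}+\Delta\mathbf{w}_{(k)}$, all lying in the ball of radius $\|\hat{\mathbf{w}}-\mathbf{w}^*\|$ about $\mathbf{w}^*$; convexity of that ball yields contraction with no leakage problem and no ill-conditioned decomposition. If you want to complete your argument, this is the substitution to make in Step 4: prove the first-order identity ``one step of \Alg{A}{K} equals the sum of the $K$ one-sided component-wise steps'' and then average, instead of decomposing $\epsilon$.
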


The proof is in the appendix.
Basic strategy of the proof is to show the equivalence between
the \Alg{A}{K} and a component-wise update algorithm based on
\Alg{A}{2}. In the process of the proof, a possible refinement of
the \Alg{A}{K} is also suggested.

\section{Discussion}

In this section, we will discuss several points related to the proposed framework, (1) relation to gradient descent method, (2) possible refinement
of the algorithm, (3) assumption of the positivity.

\subsection{Relation to gradient descent method}

In general optimization problems, a gradient descent method is a simple way to solve the problem. Here, we show that the updates of the 
gradient descent and the proposed algorithm are linearly related. 

The parameter $\{w_k\}$ should satisfy a constraint 
$\sum_{k=1}^K w_k=1$. We first replace $w_K$ by $1-\sum_{k=1}^{K-1} w_k$,
then update $w_k$ for $k=1,\ldots, K-1$ by
\begin{equation}
w_k' = w_k - \lambda \frac{\partial D(\hat{q}, q)}{\partial w_k},
\end{equation}
and $w_K'$ is obtained by $1-\sum_{k=1}^{K-1} w_k'$.

The gradient of $D(\hat{q},q)$ with respect to $w_k$ is given by
\begin{align}
\frac{\partial D(\qcur,q)}{\partial w_k} &=
\frac{\partial}{\partial w_k} \left[\expandDm{q}{\qcur}\right]\nonumber\\
&=\frac{\partial}{\partial w_k} \left[
\phi(\eval{\qcur})-\sumid \tvali{q}\evali{\qcur}\right]\nonumber\\
&= \sumid \frac{\partial \eta_i(\hat{q})}{\partial w_k}
\frac{\partial}{\partial{\eta_i(\hat{q})}}
\left[
\phi(\eval{\qcur})-\sum_{i'=1}^d \tvalix{i'}{q}
\evalix{i'}{\qcur}\right].
\end{align}
Since 
\begin{equation}
\eta(\hat{q}) = \sum_{k=1}^K w_k \eta(p_k)=\sum_{k=1}^{K-1} w_k (\eta(p_k)-\eta(p_{K})) + \eta(p_{K}),
\end{equation} 
and $\partial \phi(\eta)/\partial \eta_i=\theta_i$, we have
\begin{align}
\frac{\partial D(\hat{q},q)}{\partial w_k} &=\sumid
(\tvali{\qcur}-\tvali{q})(\evali{p_k}-\evali{p_K}),
\end{align}
which can be, from Lemma \ref{lemma:gamma}, represented using $\gamma_k$,
\begin{equation}
\frac{\partial D(\hat{q},q)}{\partial w_k} =\gamma_K-\gamma_k.
\end{equation}
The amount of update by the gradient descent is
\begin{equation}
\label{eq:DwG}
\Delta w_k^G = -\lambda  \frac{\partial D(\hat{q},q)}{\partial w_k}
=\lambda (\gamma_k - \gamma_K),
\quad k=1,\ldots, K-1,
\end{equation}
and
\begin{equation}
\label{eq:DwGK}
\Delta w_K^G = -\sum_{k=1}^{K-1} \Delta w_{k}^G = 
\lambda \sum_{k=1}^{K-1}(\gamma_K-\gamma_k).
\end{equation}
On the other hand, the amount of update by the \Alg{A}{K} is
approximated for small $\gamma_k$ by
\begin{equation}
\label{eq:DwA}
\Delta w_k^A = w_k' - w_k = w_k (f(\gamma_k)-1) 
\simeq w_k \frac{\partial f(0)}{\partial \gamma} \gamma_k.
\end{equation}
Since $\sum_k w_k \gamma_k=0$, no further normalization is
necessary.
Comparing Eq.~(\ref{eq:DwG}) and Eq.~(\ref{eq:DwGK}) with Eq.~(\ref{eq:DwA}),
we see that $\Delta w_k^G$ and $\Delta w_k^A$ are linearly related.
Unlike the gradient descent method, the proposed framework does not
need explicit calculation of the coodinate.

\subsection{Possible refinement of the algorithm}
\label{sec:adaptive}
As explained in Sec.~\ref{sec:stability2}, the condition for convergence
depends on the true parameter, thus one approach to use the adaptively
change the derivative of $f$ is to replace
the true parameter by its estimate. This approach also requires
to estimate the Fisher information.

Another possibility for the refinement of the algorithm is based
on the analysis in the appendix. It will be shown that the \Alg{A}{K}
is equivalent to the slower version of the component-wise update
algorithm. More specifically, the amount of the update $\Delta w_k$
is smaller by the factor $1-w_k$. Therefore, the update
rule in the \Alg{A}{K}, $w_k'=w_k f(\gamma_k)$ can be replaced by
\begin{equation}
w_k' = w_k f\left(\frac{\gamma_k}{1-w_k}\right),
\end{equation}
which does not change the condition of the convergence.

\subsection{On the assumption of positivity}
\label{sec:discussassumption}

In Sec.~\ref{sec:geo}, we assumed that the projection lies on the
convex hull $P\subset \dual{M}$
spanned by the basis vectors. In general, however, the projection
point can be out of $P$. In such a case, we generalize the problem
to find a point on $\dual{M}$ that minimizes the divergence,
\begin{equation}
 \qtrue = \arg\min_{p\in P} D(p, q).
\end{equation}
When the projection point is out of $P$, the solution $\qtrue$ of this problem
is on the boundary of $P$ and the $\nabla$-geodesic connecting $q$ and $\qtrue$
is not orthogonal to $\dual{M}$ any more.

The proposed algorithm itself works even in this case, because
the boundary is again a convex hull of a subset of basis vectors.
However, we have to be careful about one thing:
once a certain $w_k$ becomes 0, it cannot take positive value
any longer, which means that if the current estimate reaches to the
boundary that does not include the optimal solution, then the estimator
cannot escape from the boundary.

Without the assumption of $w_i>0$, the $\nabla$-projection of $q$ to a dual autoparallel submanifold $M$ always
exists uniquely and such a formulation is studied
as e-PCA and m-PCA framework\cite{akaho2004} or exponential family PCA
in a special case\cite{collins2001}.
However, the algorithm proposed in this paper cannot be applied as it is, because it is derived under the assumption. 
One method of update for this general case is as follows:
$w_k$ should be increased for positive $\gamma_k$ and
should be decreased for negative $\gamma_k$. Also, $\sum_k w_k=1$
should be preserved. Therefore, let $\mathcal{K}_P$ be a set of
indices with positive $\gamma_k$ and $\mathcal{K}_P$ be
a set of indices with negative $\gamma_k$. Then
\begin{align}
w_k' &= w_k + \epsilon\frac{\gamma_k}{\sum_{l\in\mathcal{K}_P}\gamma_l},
\quad k\in\mathcal{K}_P, \nonumber \\
w_k' &= w_k -  \epsilon\frac{\gamma_k}{\sum_{l\in\mathcal{K}_N}\gamma_l},
\quad k\in\mathcal{K}_N,
\end{align}
for a learning constant $\epsilon$.
There are several variations of such an update, and we also have to take
care the update does not make $w_k$ out of the domain of $w_k$.
The investigation of convergence property of the modified algorithm
is left as a future work.

\section{Concluding remarks}
We proposed a geometrical projection algorithm that only requires the
calculation of divergences. We also showed the condition of the local
stability of the algorithm. There are various applications in machine learning
and related areas in which the projection onto an autoparallel submanifold
is needed, and they are left as future works.

\section*{Acknowledgement}
This work was supported by JSPS KAKENHI Grant Numbers 
17H01793, 19K12111.

%
%
%

\bibliographystyle{plain}
\bibliography{gproj}

\setcounter{section}{1}
\setcounter{subsection}{0}

\renewcommand{\thesection}{\Alph{section}}

\section*{Appendix \ Proof of Theorem \ref{th:main}}

\begin{figure}
\centering
\includegraphics[width=10cm]{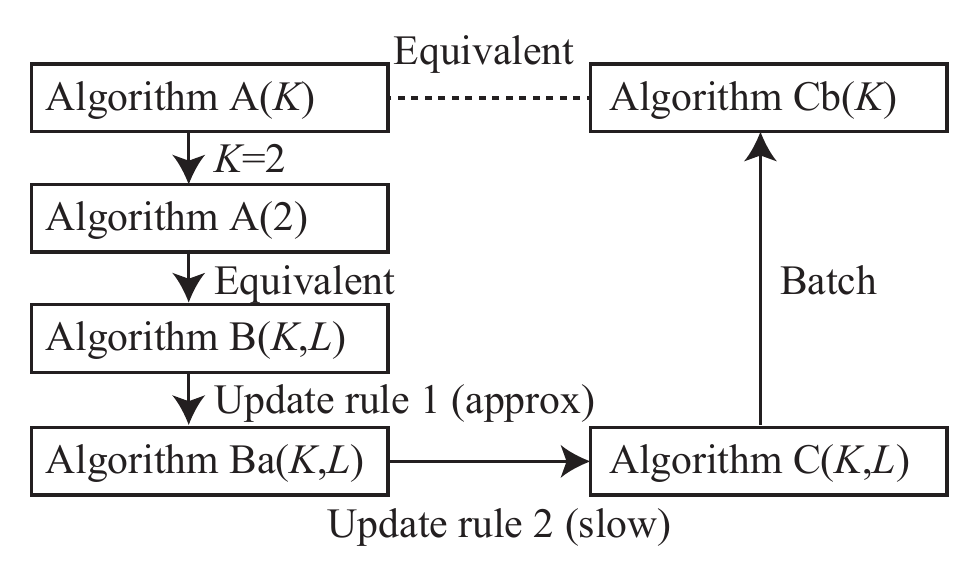}
\caption{Relation of the algorithms}
\label{fig:algorithm}
\end{figure}


\subsection{Component-wise algorithm}

\begin{figure}
\centering
\includegraphics[width=10cm]{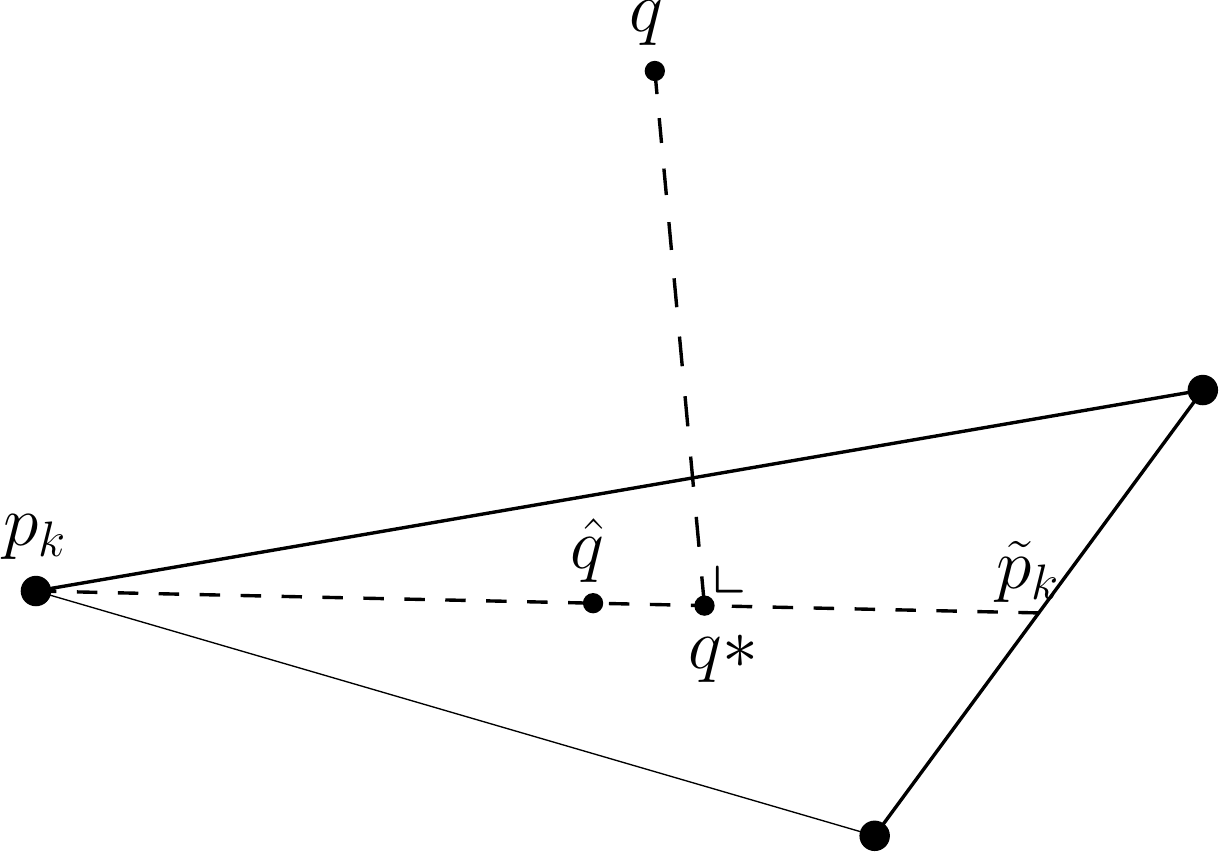}
\caption{Component-wise algorithm}
\label{fig:component-wise}
\end{figure}

Suppose the point $p_k^\dag$ be the point where
the extension line from $p_k$ through $\qcur$ intersects with
the boundary of $\dual{M}$ (Fig.~\ref{fig:component-wise}).
Let us introduce the notation $w_k(p)$ to specify the point $p$ of $\dual{M}$,
\begin{equation}
 \eval{p} = \sumkK w_k(p) \eval{p_k},
\end{equation}
for instance $w_{l}(p_k) = \delta_{k}(l)$ and
$w_k(\qcur)=\wcur{k}$. 
Since $\eval{p_k}$, $\eval{\qcur}$ and $\eval{p_k^\dag}$
are on the same line, by taking an appropriate $\rho$,
\begin{equation}
\label{eq:etadag}
 \eval{p_k^\dag} = (1-\rho) \eval{p_k} + \rho \eval{\qcur}
\end{equation}
or equivalently,
\begin{equation}
 w_{l}(p_k^\dag) = (1-\rho) w_{l}(p_k) + \rho w_{l}(\qcur).
\end{equation}
For the $p_k^\dag$, $w_k(p_k^\dag)$ should
be zero, 
\begin{equation}
  (1-\rho)w_k(p_k) + \rho \wcur{k} = 0, 
\end{equation}
that is
\begin{equation}
 \rho = \frac{1}{1-\wcur{k}},
\end{equation}
and the point $p_k^\dag$ is given by
\begin{equation}
\label{eq:pdag}
 \eval{p_k^\dag}=
  \frac{1}{1-\wcur{k}}\sum_{l\ne k}\wcur{l}\eval{p_{l}}
\end{equation}

We can consider the component-wise update
for $K=2$ by using $p_k$ and $p_k^\dag$.

\begin{algorithm}
\caption{Geometrical \Alg{B}{K,L} Component-wise} 
\begin{algorithmic}[1]
 \State \textbf{Initialize} $\{w_k\}_{k=1,\ldots,K}$
   s.t.\ $\sumkK w_k = 1, w_k > 0$
 \Repeat
 \For{$k=1,\ldots,K$}
 \State Find $p_k^\dag$
 \For{$\mbox{count}=1$ to $L$}
  \State Update $w_k$ by the \Alg{A}{2} with two basis $p_k$
       and $p_k^\dag$
 \EndFor
 \EndFor
\Until{Stopping criterion is satisfied}
\State \Return $\mathbf{w}$
\end{algorithmic}
\end{algorithm}

More detailed procedures of (a) and (b) are described later. 
In the algorithm,
the number $L$ controls how each component-wise update converges,
which plays an important role for fast convergence as will be demonstrated
in Sec.~\ref{sec:conv}.

\begin{proposition}
If the condition of local stability for $K=2$ is satisfied for all $k$,
\Alg{B}{K,L} is locally stable for any $L\ge1$.
\end{proposition}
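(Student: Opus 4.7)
I would prove the proposition by identifying each inner-loop step of \Alg{B}{K,L} with an \Alg{A}{2} step whose local stability is given by hypothesis, then combining the one-dimensional contractions into a Lyapunov decrease of the canonical divergence over each outer cycle.

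The starting point is a geometric observation. From the definition of $p_k^\dag$ in (\ref{eq:pdag}), $\eval{\qcur}=\wcur{k}\eval{p_k}+(1-\wcur{k})\eval{p_k^\dag}$, so the $\dual{\nabla}$-geodesic through $p_k$ and $p_k^\dag$ is a one-dimensional affine sub-line of $\dual{M}$ passing through $\qcur$. Evaluating at $\qcur=\qtrue$ gives $\eval{\qtrue}=\wtrue{k}\eval{p_k}+(1-\wtrue{k})\eval{p_k^\dag(\qtrue)}$, so the $\nabla$-projection of $q$ onto this line coincides with $\qtrue$ itself, with weight $\wtrue{k}$ on $p_k$. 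By continuity of $p_k^\dag$ in $\qcur$, the same holds approximately for $\qcur$ near $\qtrue$: the line's $\nabla$-projection of $q$ stays close to $\qtrue$.

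Each inner iteration of \Alg{B}{K,L} is then literally an \Alg{A}{2} step on the basis $\{p_k, p_k^\dag(\qcur)\}$ with current weight $\wcur{k}$ on $p_k$. The hypothesized $K=2$ stability at this sub-optimum gives a linearized contraction factor $\mu_k=1-\wtrue{k}(1-\wtrue{k})\dgamma0\, g(\wtrue{k})$ satisfying $|\mu_k|<1$. Since $D(\qcur,q)$ is locally quadratic in the one-dimensional deviation from the line's projection (by Pythagoras on the line together with the Fisher expansion), the inner step strictly decreases $D(\qcur,q)$ whenever $\qcur$ is not already that projection. Next I use the Pythagorean identity $D(\qcur,q)=D(\qtrue,q)+D(\qcur,\qtrue)$ on $\dual{M}$: the quantity $V(\qcur):=D(\qcur,\qtrue)$ is non-negative, vanishes only at $\qtrue$, locally equivalent to $\|\qcur-\qtrue\|^2$, and is non-increasing along the trajectory of \Alg{B}{K,L}. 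For strict decrease per outer cycle when $\qcur\neq\qtrue$, I observe that the $K$ directions $\eval{p_k}-\eval{p_k^\dag(\qcur)}$ span the tangent space of $\dual{M}$ at $\qcur$: they satisfy the single relation $\sum_k(1-\wcur{k})(\eval{p_k}-\eval{p_k^\dag(\qcur)})=0$ but are otherwise generic for positive weights. Hence the nonzero gradient of $D(\cdot,q)$ on $\dual{M}$ at $\qcur$ has a nonzero component along at least one of them, so at least one step in the cycle strictly reduces $D$. This makes $V$ a strict Lyapunov function on a neighborhood of $\qtrue$, yielding local stability for any $L\ge1$.

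The principal obstacle is ensuring strict decrease of $D$ from a single inner iteration (finite $L\ge1$, not full one-dimensional optimization): this is exactly what the $K=2$ hypothesis delivers, via the strict contraction $|\mu_k|<1$ combined with the local quadratic behavior of $D$. The remaining ingredients---continuous dependence of $p_k^\dag$ on $\qcur$, spanning of $T_{\qcur}\dual{M}$ by the $K$ geodesic directions, and the local equivalence $V(\qcur)\asymp\|\qcur-\qtrue\|^2$---follow from standard continuity and convexity arguments.
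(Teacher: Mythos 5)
The paper states this proposition without an explicit proof: its justification is left implicit in the identification of each inner iteration of \Alg{B}{K,L} with an \Alg{A}{2} step on the line through $p_k$ and $p_k^\dag$, which is exactly the identification you make. Your proposal goes further than the paper by addressing the point the paper skips, namely how the $K$ one-dimensional contractions combine: you turn $D(\cdot,q)$ (equivalently $V=D(\cdot,\qtrue)$, by the Pythagorean relation on $\dual{M}$) into a Lyapunov function that no inner step increases and that at least one step per outer cycle strictly decreases, the latter because the $K$ geodesic directions span the tangent space of $\dual{M}$ at $\qcur$ and $\gamma_k=0$ for all $k$ forces $\qcur=\qtrue$. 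This is a sound and more complete route than the paper's implicit one, and it also explains why the conclusion is insensitive to $L$.

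Two points deserve repair or emphasis. First, the linear relation you state among the directions is off: since $\eval{\qcur}=\wcur{k}\eval{p_k}+(1-\wcur{k})\eval{p_k^\dag}$, one has $(1-\wcur{k})(\eval{p_k}-\eval{p_k^\dag})=\eval{p_k}-\eval{\qcur}$, so the dependency is $\sum_k \wcur{k}(1-\wcur{k})(\eval{p_k}-\eval{p_k^\dag})=\sum_k\wcur{k}(\eval{p_k}-\eval{\qcur})=0$, not $\sum_k(1-\wcur{k})(\eval{p_k}-\eval{p_k^\dag})=0$. This does not affect the spanning conclusion, since the vectors $\eval{p_k}-\eval{\qcur}$, $k=1,\ldots,K$, always span the direction space of the affine hull of the $p_k$. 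Second, the claim that each inner step strictly decreases $D(\qcur,q)$ needs the overshooting case $\mu_k\in(-1,0)$ handled with care: $D(\cdot,q)$ restricted to the line is strictly convex but not symmetric about its minimizer, so a contraction of $|\omega_k-\omega_k^*|$ implies a decrease of $D$ only through the second-order (quadratic) approximation, with third-order errors. This is acceptable for a local statement provided $|\mu_k|$ is bounded away from $1$ --- which the strict inequality in the $K=2$ condition supplies --- and provided the neighborhood is shrunk accordingly; you should say so explicitly, since the same caveat governs the transfer, by continuity, of the $K=2$ condition (stated at $\qtrue$) to the lines through nearby $\qcur$.
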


Now let us give the procedures in \Alg{B}{K,L}.
The $K=2$ algorithm between $p_k$ and $p_k^\dag$, the current
solution $\qcur$ should be represented in the form of
\begin{equation}
 \eval{\qcur} = \omega_k \eval{p_k} + \omega_k^\dag \eval{p_k^\dag},
\label{eq:omega}
\end{equation}
where $\omega_k^\dag=1-\omega_k$, and then calculate $\gamma_k$
and $\gamma_k^\dag$ based on the Pythagorean relation, and then
apply the update (\ref{eq:updatew}) and (\ref{eq:normalw}).
From Eq.~(\ref{eq:etadag}),
\begin{equation}
 \eval{\qcur} = -\frac{1-\rho}{\rho}\eval{p_k} + \frac{1}{\rho}\eval{p_k^\dag}
  = \wcur{k} \eval{p_k} + (1-\wcur{k})\eval{p_k^\dag},
\end{equation}
then the weights for $p_k$ and $p_k^\dag$ are obtained as
$\omega_k=\wcur{k}$ and $\omega_k^\dag=1-\wcur{k}$ respectively.
The update of $\omega_k$ is written as
\begin{equation}
 \omega_k' = \omega_k f(\gamma_k), \quad
 \omega_k^\dag{}' = \omega_k^\dag f(\gamma_k^\dag),
\end{equation}
and then normalization is performed as
\begin{equation}
 \omega_k''= \frac{\omega_k'}{\omega_k'+\omega_k^\dag{}'}, \quad
 \omega_k^{\dag}{}''= \frac{\omega_k^\dag{}'}{\omega_k'+\omega_k^\dag{}'} =
 1-\omega_k''.
\end{equation}
From Eqs.~(\ref{eq:pdag}) and (\ref{eq:omega}), we have
\begin{equation}
 \eval{\qcur} = \omega_k \eval{p_k} + \frac{\omega_k^\dag}{1-\wcur{k}} \sum_{l\ne
  k} \wcur{l} \eval{p_{l}}.
\end{equation}
By updating $\omega_k$ and $\omega_k^\dag$ to
$\omega_k''$ and $\omega_k^\dag{}''$ respectively,
then the corresponding update of $\{w_l\}_{l=1,\ldots,K}$ is given by
\begin{align}
 w_k'' &= \omega_k'', \nonumber \\
 w_{l}'' & = \frac{\omega_k^\dag{}''}{1-w_k}w_{l}
 =\frac{1-w_k''}{1-w_k}w_{l}, \quad l\ne k.
\end{align}
This update requires to calculate $p_k^\dag$ (and related values),
which increases the computational complexity.
For later discussions, let us rewrite the algorithm when the amount
of update is sufficiently small.
From the discussion on the analysis of $K=2$ (Eq.(\ref{eq:updatew2})), if the update
$\Delta w_k = \Delta \omega_k = \omega_k f(\gamma_k) - \omega_k = w_k f(\gamma_k) - w_k$ is sufficiently small,
\begin{equation}
 \Delta \omega_k^\dag = \omega_k^\dag f(\gamma_k^\dag) - \omega_k^\dag \simeq
   - \Delta w_k 
\end{equation}
holds, where $\simeq$ represents the neglecting higher order terms of
$\Delta w_k$.
By this approximation, the update is simplified as follows:

\paragraph{Update rule 1:}
\begin{align}
 \wcur{k}'' &= \wcur{k} + \Delta w_k, \nonumber \\
 \wcur{l}'' &= \frac{1-\wcur{k}''}{1-\wcur{k}}\wcur{l}
 = \wcur{l}-\frac{\Delta w_k}{1-\wcur{k}}\wcur{l}, \quad l\ne k.
\end{align}

Note that calculating $p_k^\dag$ is not
necessary any longer.
Based on the Update rule 1, the algorithm is simplified.

\begin{algorithm}
\caption{Geometrical \Alg{Ba}{K,L} Component-wise approximated} 
\begin{algorithmic}[1]
 \State \textbf{Initialize} $\{w_k\}_{k=1,\ldots,K}$
   s.t.\ $\sumkK w_k = 1, w_k > 0$
 \Repeat
 \For{$k=1,\ldots,K$}
 \For{$\mbox{count}=1$ to $L$}
  \State Update $w_k$ by \textbf{Update rule 1}
 \EndFor
 \EndFor
\Until{Stopping criterion is satisfied}
\State \Return $\mathbf{w}$
\end{algorithmic}
\end{algorithm}

\Alg{Ba}{K,L} behaves similarly to \Alg{B}{K,L} locally and
it requires smaller computation cost.
\begin{proposition}
If the condition of local stability for $K=2$ is satisfied for all $k$,
\Alg{Ba}{K,L} is locally stable for any $L\ge1$.
\end{proposition}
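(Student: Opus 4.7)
The plan is to exploit the fact that the update of \Alg{Ba}{K,L} is, by construction, the first-order truncation of the exact inner update of \Alg{B}{K,L}. Consequently the two maps share the same linearization at the fixed point $\mathbf{w}^*=(w_1^*,\ldots,w_K^*)$, and local stability, being a property of that linearization, transfers directly from \Alg{B}{K,L} (established by the preceding proposition) to \Alg{Ba}{K,L}.

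First I would fix an index $k$ and compare a single inner $K=2$ step of the two algorithms. Writing $\omega_k=\wcur{k}$, $\omega_k^\dag=1-\wcur{k}$, the exact sub-update in \Alg{B}{K,L} produces
\[
\omega_k' = \omega_k f(\gamma_k), \qquad \omega_k^{\dag\prime} = \omega_k^\dag f(\gamma_k^\dag),
\]
followed by the normalization $\omega_k''=\omega_k'/(\omega_k'+\omega_k^{\dag\prime})$. The derivation that produced \textbf{Update rule 1} shows exactly that $\Delta\omega_k^\dag=-\Delta\omega_k+o(\|\boldsymbol{\epsilon}\|)$, where $\boldsymbol{\epsilon}=\mathbf{w}-\mathbf{w}^*$; equivalently $\omega_k'+\omega_k^{\dag\prime}=1+o(\|\boldsymbol{\epsilon}\|)$. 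Thus the normalization contributes only $O(\|\boldsymbol{\epsilon}\|^2)$, so the map on $w_k$ induced by the exact inner step agrees with \textbf{Update rule 1} through first order. The accompanying formula $w_l\mapsto \tfrac{1-w_k''}{1-w_k}w_l$ for $l\ne k$ is identical in form in both algorithms, so first-order agreement on $w_k''$ propagates to the remaining coordinates.

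Next I would concatenate the $L$ inner repetitions and the outer loop over $k=1,\ldots,K$. Let $\Phi_B$ and $\Phi_{Ba}$ denote the full-epoch maps of \Alg{B}{K,L} and \Alg{Ba}{K,L} respectively. Each elementary step moves $\mathbf{w}$ by $O(\|\boldsymbol{\epsilon}\|)$, so after the $KL$ composed steps the iterates stay in a neighborhood of radius $O(\|\boldsymbol{\epsilon}\|)$ of $\mathbf{w}^*$, and the per-step $O(\|\boldsymbol{\epsilon}\|^2)$ discrepancy accumulates to $\Phi_{Ba}(\mathbf{w}^*+\boldsymbol{\epsilon})=\Phi_B(\mathbf{w}^*+\boldsymbol{\epsilon})+O(\|\boldsymbol{\epsilon}\|^2)$. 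In particular the Jacobians of the two maps at $\mathbf{w}^*$ coincide. The previous proposition tells us that, under the stated hypothesis, $\Phi_B$ is locally stable, i.e.\ the spectral radius of its Jacobian $J$ at $\mathbf{w}^*$ is strictly less than $1$. The standard linearization argument for discrete dynamical systems then yields local stability of $\Phi_{Ba}$ as well, completing the proof.

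The main obstacle I anticipate is the inductive bookkeeping in the previous paragraph: one must verify that the intermediate iterate after each of the $KL$ elementary updates remains within an $O(\|\boldsymbol{\epsilon}\|)$-neighborhood of $\mathbf{w}^*$ so that the pointwise estimate $\omega_k'+\omega_k^{\dag\prime}=1+o(\|\boldsymbol{\epsilon}\|)$ can be reapplied without ever degrading into a first-order error. This requires, at each inner step, an a priori bound on $\gamma_k$ in terms of the current distance to $\mathbf{w}^*$ (readily obtained from Lemma \ref{lemma:gamma}, since $\gamma_k$ vanishes linearly in that distance), together with a uniform bound on $f'$ near the origin so that $\Delta w_k=O(\|\boldsymbol{\epsilon}\|)$ uniformly across the $L$ inner iterations. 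Once these uniform bounds are in place, the first-order equivalence of $\Phi_{Ba}$ and $\Phi_B$, and hence the conclusion, follow routinely.
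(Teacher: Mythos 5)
Your proposal is correct and follows essentially the same route as the paper, which justifies this proposition only by observing that Update rule~1 is the first-order truncation of the exact component-wise step of \Alg{B}{K,L}, so that \Alg{Ba}{K,L} ``behaves similarly to \Alg{B}{K,L} locally'' and inherits its stability. Your write-up merely makes explicit the bookkeeping (first-order agreement of each elementary step, composition over the $KL$ steps, coincidence of the linearizations at $\mathbf{w}^*$) that the paper leaves implicit.
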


\subsection{One-side component-wise update}

The component-wise update without any approximation
requires to find $p_k^\dag$, which
may cause additional complexity compared to the \Alg{A}{K}.
Here we consider a simpler algorithm:
only the $k$-th weight is updated with fixing other weights
and normalize all weights, that is,

\paragraph{Update rule 2:}
\begin{align}
 \wcur{k}' &= \wcur{k} + \Delta w_k, \nonumber \\
 \wcur{k}'' &= \frac{\wcur{k}'}{\wcur{k}' + \sum_{l\ne k} \wcur{l}} =
 \frac{\wcur{k}'}{1+\Delta w_k}, \nonumber \\
 \wcur{l}'' &= \frac{\wcur{l}}{1 + \Delta w_k},\quad l\ne k.
\label{eq:up2}
\end{align}

This update does not require the computation of $p_k^\dag$.
We examine the relation between Update 1 and 2.
Assuming $\Delta w_k$ is sufficiently small, the Update 2 is approximated by
\begin{equation}
\label{eq:up2appk}
 \wcur{k}'' \simeq \frac{\wcur{k} + \Delta w_k}{1 + \Delta w_k} \simeq
  (\wcur{k} + \Delta w_k) (1 - \Delta w_k) \simeq \wcur{k} + (1-\wcur{k})\Delta w_k,
\end{equation}
 \begin{equation}
\label{eq:up2appl}
  w_l'' \simeq (1-\Delta w_k)\wcur{l} = \wcur{l} - \Delta w_k \wcur{l}, \quad l\ne k,
 \end{equation}
which means that the Update rule 2 is equivalent to the Update rule 1 where
the learning constant is shortened by a factor $1-\wcur{k}$.

Therefore, we see that if the Update rule 1 is locally stable,
the Update rule 2 is also locally stable.

In a similar way with component-wise algorithm, 
we can obtain one-side component-wise algorithm
for general $K$ based on Update rule 2.

\begin{algorithm}
\caption{Geometrical \Alg{C}{K,L} Component-wise one-side} 
\begin{algorithmic}[1]
 \State \textbf{Initialize} $\{w_k\}_{k=1,\ldots,K}$
   s.t.\ $\sumkK w_k = 1, w_k > 0$
 \Repeat
 \For{$k=1,\ldots,K$}
 \For{$\mbox{count}=1$ to $L$}
  \State Update $w_k$ by \textbf{Update rule 2}
 \EndFor
 \EndFor
\Until{Stopping criterion is satisfied}
\State \Return $\mathbf{w}$
\end{algorithmic}
\end{algorithm}

Note that updating $w_k$ affects the value of other $w_l$ ($l\ne k$)
because of the normalization.

\begin{proposition}
If the condition of local stability for $K=2$ is satisfied for all $k$,
\Alg{C}{K,L} is locally stable for any $L\ge1$.
\end{proposition}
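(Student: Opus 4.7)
The plan is to reduce the claim directly to the previously established local stability of \Alg{Ba}{K,L} by invoking the equivalence between Update rule 2 and Update rule 1 derived in Eqs.~(\ref{eq:up2appk}) and (\ref{eq:up2appl}). Since the displayed equivalence is stated for a single component update and the outer loop structure of \Alg{C}{K,L} is identical to that of \Alg{Ba}{K,L}, the linearizations of the two algorithms around the fixed point differ only by a diagonal rescaling.

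Concretely, I would first observe that near the fixed point $\mathbf{w}^*$, one inner step of \Alg{C}{K,L} updating component $k$ is, to first order in the perturbation $\wcur{k}-w_k^*$, identical to one inner step of \Alg{Ba}{K,L} in which the nominal increment $\Delta w_k = \wcur{k}(f(\gamma_k)-1)$ has been replaced by $(1-\wcur{k})\Delta w_k$. Equivalently, the direction-$k$ linearization corresponds to replacing $f$ by $\tilde f_k(\gamma) = 1 + (1-w_k^*)(f(\gamma)-1)$, whose derivative at the origin is
\begin{equation}
\tilde f_k'(0) = (1-w_k^*)\,f'(0).
\end{equation}

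Next I would apply the $K=2$ stability bound (\ref{eq:bound0}) to this scaled derivative. Since Assumption~\ref{assumption} guarantees $0<1-w_k^*<1$, the hypothesis $f'(0) < 2/(w_k^*(1-w_k^*)g(w_k^*))$ implies the strict inequality
\begin{equation}
\tilde f_k'(0) \;=\; (1-w_k^*)\,f'(0) \;<\; \frac{2}{w_k^*(1-w_k^*)\,g(w_k^*)}
\end{equation}
for every $k$. Hence each inner step of \Alg{C}{K,L} linearizes to a strict contraction in its working direction, and composing $L$ such contractions in each of the $K$ components — exactly as in the proof of the preceding proposition for \Alg{Ba}{K,L} — yields a linear map whose spectral radius is strictly less than one, which is the desired local stability.

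The main obstacle is confirming that the scalar rescaling by $(1-\wcur{k})$, which is valid only to first order in $\Delta w_k$, does not produce second-order contributions that corrupt the contraction argument. This is handled by noting that $\Delta w_k = O(\wcur{k}-w_k^*)$ near the fixed point, so the $O((\Delta w_k)^2)$ remainders implicit in Eqs.~(\ref{eq:up2appk}) and (\ref{eq:up2appl}) are $o(\|\mathbf{w}-\mathbf{w}^*\|)$ and vanish from the linearization; moreover, Update rule 2 preserves the simplex constraint $\sum_k w_k = 1$ exactly through its renormalization, so no extra first-order drift transverse to the constraint needs to be controlled.
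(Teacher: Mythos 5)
Your proposal is correct and follows essentially the same route as the paper: the paper's justification for this proposition is precisely the observation that Update rule 2 linearizes to Update rule 1 with the learning constant shortened by the factor $1-\wcur{k}\in(0,1)$, so the $K=2$ stability bound (an upper bound on $df(0)/d\gamma$) remains satisfied and stability is inherited from \Alg{Ba}{K,L}. Your explicit reformulation via the rescaled function $\tilde f_k$ and the remark on the harmlessness of the second-order remainders only make the paper's (largely implicit) argument more precise.
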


\subsection{Local stability of the \Alg{A}{K}}
\label{sec:conv}

Now we are ready to prove the local stability of \Alg{A}{K}.

The \Alg{C}{K,1} is a sequential algorithm, and we can 
construct corresponding ``batch'' version of
the algorithm.

\begin{algorithm}
\caption{Geometrical \Alg{Cb}{K}} 
\begin{algorithmic}[1]
 \State \textbf{Initialize} $\{w_k^{(0)}\}_{k=1,\ldots,K}$
   s.t.\ $\sumkK w_k^{(0)} = 1, w_k^{(0)} > 0, t:=0$
 \Repeat
 \For{$k=1$ to $K$}
 Calculate the amount of
       changing values $\Delta w_{l(k)}$ of weight $w_l$ in the update
       of $w_k$ (Eq.(\ref{eq:up2})).
 \EndFor
 \State Calculate $\gamma_k$ by (\ref{eq:gammak}), where
       $\tval{\qcur}=\sum_{i=1}^K w_k^{(t)} \tval{p_k}$ 
 \State Update the weights by
       \begin{equation}
	w_k' = w_k^{(t)} + \sumlK \Delta w_{k(l)}, \quad w_l'=w_l^{(t)}, l\ne k
       \end{equation}
 \State Normalize $w_k'$
       \begin{equation}
	w_k^{(t+1)} = \frac{w_k'}{\sumkK w_k'}
       \end{equation}
 \State $t:=t+1$
\Until{Stopping criterion is satisfied}
\State \Return $\mathbf{w}$
\end{algorithmic}
\end{algorithm}

Since \Alg{Cb}{K} updates the weights by $K$ perturbations,
the condition for local stability is changed.
The following lemma gives a sufficient condition.
\begin{lemma}
\Alg{Cb}{K} is locally stable if $f$ satisfies
\begin{equation}
\label{eq:boundK2}
\dgamma0 < \frac{2}{K \max_k w_k^* (1-w_k^*) g(w_k^*)},
\end{equation}
 where $w_k^*$ denotes the optimal value.
\end{lemma}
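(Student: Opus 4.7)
The plan is to linearize the update of \Alg{Cb}{K} at the optimum $w^*=(\wtrue{1},\ldots,\wtrue{K})$ and reduce the local-stability question to a spectral bound on a single positive semi-definite matrix. Writing $\epsilon_l:=w_l-\wtrue{l}$ (so $\sum_l\epsilon_l=0$), I would first substitute the first-order forms (\ref{eq:up2appk})--(\ref{eq:up2appl}) into each sub-problem's contribution $\Delta w_{l(k)}$, using $\Delta w_k\approx \wtrue{k}\,f'(0)\,\gamma_k$ from (\ref{eq:DwA}). Summing the $K$ parallel sub-problem contributions and exploiting the Pythagorean identity $\sum_k \wtrue{k}\gamma_k=0$ valid at $\qtrue$, the aggregate update collapses to the compact form
\[
\epsilon'_l \;=\; \epsilon_l \;+\; f'(0)\,\wtrue{l}\,\gamma_l,
\]
which simultaneously verifies $\sum_l\epsilon'_l=0$, so the normalization step contributes only second-order corrections and may be dropped in the local analysis.

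Next I would apply Lemma~\ref{lemma:gamma} together with the first-order Legendre expansion at $\qtrue$ to express $\gamma$ as a linear function of $\epsilon$. Setting $u_k:=\eval{p_k}-\eval{\qtrue}$ and letting $U$ be the matrix with columns $u_k$, one obtains $\gamma = -U^T g^{-1}(\qtrue)\,U\,\epsilon$, so the iteration matrix for $\epsilon$ is $I - f'(0)\,\mathrm{diag}(w^*)\,U^T g^{-1} U$. Conjugating by $\mathrm{diag}(\sqrt{w^*})$ yields the symmetric positive semi-definite matrix $P := \mathrm{diag}(\sqrt{w^*})\,U^T g^{-1} U\,\mathrm{diag}(\sqrt{w^*})$; hence all eigenvalues of the iteration are real of the form $1-f'(0)\lambda_i$ with $\lambda_i(P)\ge 0$. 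On the invariant subspace $\{\sum_l\epsilon_l=0\}$ (where the $\lambda_i$ are strictly positive under affine independence of $\{\eval{p_k}\}$), local contraction is equivalent to $f'(0)\,\lambda_{\max}(P)<2$.

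The final step bounds $\lambda_{\max}(P)$ by its trace. A direct computation using (\ref{eq:pdag}) yields $u_k = (1-\wtrue{k})\bigl(\eval{p_k}-\eval{p_k^\dag}\bigr)$, and consequently $u_k^T g^{-1} u_k = (1-\wtrue{k})^2\, g(\wtrue{k})$, where $g(\wtrue{k})$ is the Fisher information of the 1-dimensional sub-line through $p_k$ and $p_k^\dag$ at $\qtrue$. Therefore
\[
\lambda_{\max}(P) \;\le\; \mathrm{tr}(P) \;=\; \sum_k \wtrue{k}(1-\wtrue{k})^2\, g(\wtrue{k}) \;\le\; K\max_k \wtrue{k}(1-\wtrue{k})\, g(\wtrue{k}),
\]
and substituting this into the spectral stability condition $f'(0)\,\lambda_{\max}(P)<2$ produces exactly (\ref{eq:boundK2}).

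The main obstacle I anticipate is the geometric identity $u_k=(1-\wtrue{k})\bigl(\eval{p_k}-\eval{p_k^\dag}\bigr)$: this is what ties the coordinate-free Rayleigh quantity $\lambda_{\max}(P)$ to the concrete per-component Fisher informations $g(\wtrue{k})$ appearing in the lemma's statement, and it is the only step that requires the explicit formula for $p_k^\dag$. The remaining pieces are essentially linear-algebraic: the Pythagorean orthogonality at $\qtrue$ drives the crucial simplification in Step~1 (without it one would face a cross-term involving all $\gamma_k$), the similarity to the symmetric $P$ makes the spectrum real and non-negative in Step~2, and the trace bound is a one-line consequence of $(1-\wtrue{k})^2\le 1-\wtrue{k}$ together with $\lambda_{\max}\le \mathrm{tr}$ for PSD matrices.
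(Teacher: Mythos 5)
Your proof is correct, but it takes a genuinely different route from the paper's. The paper argues at the level of the individual perturbation vectors: one sweep of \Alg{Cb}{K} is the superposition of the $K$ component-wise contributions $\Delta\mathbf{w}_k$, each of which is asserted (via the $K=2$ analysis and the propositions on \Alg{C}{K,L}) to move $\hat{\mathbf{w}}$ strictly closer to $\mathbf{w}^*$ in Euclidean norm; dividing the learning rate by $K$ turns the batch step into the average $\frac{1}{K}\sumkK(\hat{\mathbf{w}}+\Delta\mathbf{w}_k-\mathbf{w}^*)$, and the triangle inequality $\|\sumkK v_k\|\le K\max_k\|v_k\|$ gives contraction. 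You instead linearize the whole iteration, obtaining $\epsilon'=(I-\dgamma0\,\mathrm{diag}(w^*)U^{T}g^{-1}U)\epsilon$, symmetrize by conjugation with $\mathrm{diag}(\sqrt{w^*})$, and extract the factor $K$ from $\lambda_{\max}(P)\le\mathrm{tr}(P)\le K\max_k \wtrue{k}(1-\wtrue{k})g(\wtrue{k})$, where the identity $u_k=(1-\wtrue{k})(\eval{p_k}-\eval{p_k^\dag})$ correctly converts the diagonal entries of $P$ into the per-component Fisher informations; I checked this identity and the linearization $\gamma=-U^{T}g^{-1}U\epsilon$, and both are right. Your route is more self-contained (the paper's proof leans on the unproven claim that each sub-update contracts the full $K$-dimensional Euclidean distance), it makes explicit where non-degeneracy of $\{\eval{p_k}\}$ is needed for strict contraction, and as a bonus the trace computation yields the sharper sufficient condition $\dgamma0<2/\sumkK \wtrue{k}(1-\wtrue{k})^2 g(\wtrue{k})$, consistent with the paper's remark that the stated bound ``might be very strict.'' Two minor points: the cancellation you invoke is the exact identity $\sumkK\wcur{k}\gamma_k=0$ (a consequence of the definition of $\qcur$, not of the Pythagorean relation), so using it with $\wtrue{k}$ in place of $\wcur{k}$ is valid only up to the first order you are working at; and the contraction you obtain is in the $\mathrm{diag}(w^*)^{-1/2}$-weighted norm, which is equivalent to the Euclidean one since all $\wtrue{k}>0$ under Assumption~\ref{assumption}. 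Neither affects the conclusion.
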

This bound is given by multiplying $1/K$ to (\ref{eq:bound}), but it might
be very strict, because (\ref{eq:bound}) for $K=2$ is a better bound.
Further, as wee see,
\Alg{Cb}{K} is very similar to \Alg{C}{K,1}, where only the difference
is whether the former is a simultaneous update and the latter is a sequential update.
It is an open problem to obtain a better bound for $K>2$.

\begin{proof}
By the update of $w_k$, suppose the weight is changed from
 $\hat{\mathbf{w}}$ to $\hat{\mathbf{w}} + \Delta \mathbf{w}_k$,
 where $\sumlK (\Delta \mathbf{w}_k)_l=0$ because of the weight constraint.
If $f$  satisfies the condition (\ref{eq:bound}), it holds
 \begin{equation}
  \|\hat{\mathbf{w}}-\mathbf{w}^*\|^2 > 
  \|\hat{\mathbf{w}}+\Delta \mathbf{w}_k -\mathbf{w}^*\|^2
 \end{equation}
for all $k$.
By multiplying $1/K$ to the value of
 $df(0)/d\gamma$, the change of weights becomes
 $\hat{\mathbf{w}} + \Delta \mathbf{w}_k / K$
 in terms of the first order approximation, and
 the simultaneous update of the all weight, it becomes
 $\hat{\mathbf{w}} + \sumkK \Delta \mathbf{w}_k / K$.
Therefore, the new weight satisfies
\begin{align}
  \|\hat{\mathbf{w}}+\sumkK\frac{1}{K}\Delta \mathbf{w}_k -\wtruebf{}\|^2
 &=  \|\frac{1}{K}\sumkK(\hat{\mathbf{w}}+\Delta \mathbf{w}_k
 -\wtruebf{})\|^2 \nonumber \\
 &= \frac{1}{K^2} \|\sumkK(\hat{\mathbf{w}}+\Delta \mathbf{w}_k
 -\wtruebf{})\|^2 \nonumber \\
 &< \frac{1}{K^2} K^2 \max_k \|\hat{\mathbf{w}}+\Delta \mathbf{w}_k
 -\wtruebf{}\|^2 \nonumber \\
 &<   \|\hat{\mathbf{w}}-\wtruebf{}\|^2,
\end{align}
 which shows local stability of \Alg{Cb}{K}.
\end{proof}

The main theorem \ref{th:main} is proved by showing equivalence between
\Alg{A}{K} and \Alg{Cb}{K} as follows.

\begin{lemma}
When the update amounts of $\{w_k\}$ are sufficiently small,
The \Alg{A}{K} is equivalent to \Alg{Cb}{K}.
\end{lemma}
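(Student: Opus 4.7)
The plan is to compute the first-order Taylor expansion of each algorithm's update in $\gamma$ (equivalently, in the displacement of $\hat{\mathbf{w}}$ from the fixed point) and show the two increments agree coordinate by coordinate. Since both algorithms compute the same $\gamma_k$ from the current iterate $\hat{\mathbf{w}}$ before any weights are modified, this reduces to a single algebraic comparison. The key identity I will invoke throughout is $\sumkK \wcur{k}\gamma_k = 0$; this follows from Lemma \ref{lemma:gamma} by substituting $\eval{\qcur}=\sumkK \wcur{k}\eval{p_k}$ into $\gamma_k = \sumid(\tvali{\qcur}-\tvali{\qtrue})(\evali{\qcur}-\evali{p_k})$ and using $\sumkK \wcur{k}=1$, so that the weighted average of $\eval{p_k}$ telescopes with $\eval{\qcur}$.

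For \Alg{A}{K}, I would expand $f(\gamma_k)=1+\dgamma0\gamma_k + o(\gamma)$. The unnormalized update is $\wcur{k}+\wcur{k}\dgamma0\gamma_k$, and the normalizer $\sumlK \wcur{l}f(\gamma_l) = 1 + \dgamma0\sumlK \wcur{l}\gamma_l$ reduces to $1$ at first order by the identity above. Hence the first-order increment is $\Delta w_k^A = \wcur{k}\dgamma0\gamma_k + o(\gamma)$.

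For \Alg{Cb}{K}, I would use the first-order expressions for Update rule 2 already recorded in Eqs.\ (\ref{eq:up2appk}) and (\ref{eq:up2appl}): the change vector $\Delta\mathbf{w}_k$ produced by updating coordinate $k$ alone has entries $(\Delta\mathbf{w}_k)_k = (1-\wcur{k})\wcur{k}\dgamma0\gamma_k$ and $(\Delta\mathbf{w}_k)_l = -\wcur{l}\wcur{k}\dgamma0\gamma_k$ for $l\ne k$. Summing these vectors over $k$ and collecting the coefficient of $\dgamma0$ in coordinate $l$ yields $(1-\wcur{l})\wcur{l}\gamma_l - \wcur{l}\sum_{k\ne l}\wcur{k}\gamma_k = \wcur{l}\gamma_l - \wcur{l}\sumkK \wcur{k}\gamma_k = \wcur{l}\gamma_l$, again by the identity. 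The closing normalization in \Alg{Cb}{K} is likewise the identity at first order for the same reason, so its batch increment in coordinate $l$ equals $\wcur{l}\dgamma0\gamma_l$, matching $\Delta w_l^A$.

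The only real difficulty is bookkeeping. I need to verify that (i) both algorithms evaluate $\gamma_k$ at the same $\hat{\mathbf{w}}$, which is automatic because \Alg{Cb}{K} is explicitly defined to compute all the $\Delta\mathbf{w}_k$ before applying any of them, and (ii) both normalization steps drop out at first order, a fact that in each case reduces to the single identity $\sumkK \wcur{k}\gamma_k = 0$. Once this identity is extracted from Lemma \ref{lemma:gamma}, the remainder is a line of algebra per algorithm, and the equivalence of the linearized updates then gives the stated equivalence in the small-update regime.
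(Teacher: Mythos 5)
Your proof is correct and follows essentially the same route as the paper's: linearize Update rule 2, sum the per-coordinate change vectors $\Delta\mathbf{w}_k$ computed at the common iterate, and compare with the first-order expansion of the multiplicative update plus normalization in \Alg{A}{K}. The only difference is that you additionally invoke the identity $\sum_{k}\hat w_k\gamma_k=0$ to reduce both increments to the common closed form $\hat w_k\,\frac{df(0)}{d\gamma}\,\gamma_k$, whereas the paper keeps the normalization term $-\sum_{l}\Delta w_l\,w_k$ on both sides and matches the unsimplified expressions directly; both arguments are valid and yield the same first-order equivalence.
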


\begin{proof}
By the \Alg{A}{K}, the weights are updated by
 \begin{equation}
  w_k' = w_k + \Delta w_k,
 \end{equation}
 \begin{align}
  w_k'' &= \frac{w_k'}{\sumlK w_l'} = \frac{w_k + \Delta w_k}{\sumlK
  (w_l + \Delta w_l)} \nonumber \\
  &= \frac{w_k + \Delta w_l}{1 + \sumlK \Delta w_l} \nonumber \\
  &\simeq (w_k + \Delta w_k) (1 - \sumlK \Delta w_l) \nonumber \\
  &\simeq w_k + \Delta w_k - \sumlK \Delta w_l w_k.
  \label{eq:AlgAupdate}
 \end{align}
On the other hand, the update of $w_k$ in the Update rule 2 for small
 $\Delta w_k$ is given by (\ref{eq:up2appk}) and (\ref{eq:up2appl}).
Therefore, the amount of change of $w_l$ for the update of $w_k$ is given by
 \begin{equation}
  \Delta w_{l(k)} = - \Delta w_k w_l,
 \end{equation}
 then summing up them and we have the update of
 of the \Alg{Cb}{K} by
 \begin{align}
  w_k''' &\simeq w_k + (1-w_k)\Delta w_k - \sum_{l\ne k} \Delta w_l w_k
  \nonumber \\
  &= w_k + \Delta w_k - \sumlK \Delta w_l w_k,
 \end{align}
which coincides the update of (\ref{eq:AlgAupdate}).
\end{proof}

\end{document}